\def\munderbar#1{\underline{\sbox\tw@{$#1$}\dp\tw@\z@\box\tw@}}
\newcommand{\norm}[1]{\left\lVert#1\right\rVert}
\newtheorem{theorem}{Theorem}
\newtheorem{lemma}{Lemma}
\newtheorem{problem}{Problem}
\newtheorem{remark}{Remark}
\newtheorem{definition}{Definition}
\newtheorem{assumption}{Assumption}
\newcommand{\R}{\mathbb{R}}
\newcommand{\Z}{\mathbb{Z}}
\newcommand{\uu}{\mathbf{u}}
\newcommand{\s}{\mathbf{s}}
\newcommand{\sig}{\mathbf{\sigma}}
\newcommand{\Ss}{\mathcal{S}}
\newcommand{\W}{\mathcal{W}}
\title{Towards Guaranteed Safety Assurance of Automated Driving Systems with Scenario Sampling: An Invariant Set Perspective (Extended Version)}
\author{Bowen Weng, Linda Capito,  Umit Ozguner, Keith Redmill
\thanks{Bowen Weng (\texttt{weng.172@osu.edu}), Linda Capito (\texttt{capitoruiz.1@osu.edu}), Umit Ozguner (\texttt{ozguner.1@osu.edu}), and Keith Redmill (\texttt{redmill.1@osu.edu}) are with Department of Electrical and Computer Engineering at Ohio State University, OH, USA.}
}
\begin{document}
\maketitle

\begin{abstract}
How many scenarios are sufficient to validate the safe Operational Design Domain (ODD) of an Automated Driving System (ADS) equipped vehicle? Is a more significant number of sampled scenarios guaranteeing a more accurate safety assessment of the ADS? Despite the various empirical success of ADS safety evaluation with scenario sampling in practice, some of the fundamental properties are largely unknown. 
This paper seeks to remedy this gap by formulating and tackling the scenario sampling safety assurance problem from a set invariance perspective. 
First, a novel conceptual equivalence is drawn between the \emph{scenario sampling safety assurance problem} and the \emph{data-driven robustly controlled forward invariant set validation and quantification problem}. This paper then provides a series of resolution complete and probabilistic complete solutions with finite-sampling analyses for the safety validation problem that authenticates a given ODD. On the other hand, the quantification problem escalates the validation challenge and starts looking for a safe sub-domain of a particular property. This inspires various algorithms that are provably probabilistic incomplete, probabilistic complete but sub-optimal, and asymptotically optimal. Finally, the proposed asymptotically optimal scenario sampling safety quantification algorithm is also empirically demonstrated through simulation experiments.
\end{abstract}

\keywords{Safety \and Scenario Sampling \and Invariant Set \and Automated Driving System}

\newpage
\section{Introduction}\label{sec:introduction}

If an Original Equipment Manufacturer (OEM) builds a product such as an Automated Driving System (ADS) or Advanced Driver Assistance System (ADAS) equipped vehicle, it is also expected that the OEM provides the Operational Design Domain (ODD) to fully specify the vehicle's capability. The safety assurance problem naturally arises as one seeks to either derive such an ODD (i.e., \emph{quantification}) or verify the validity of a given ODD (i.e., \emph{validation}). Since the constructive modeling of ADS/ADAS-equipped vehicles and real-world traffic interactions are complex, the data-driven scenario sampling approach is an important alternative for safety assurance. In general, a scenario sampling algorithm assesses the safety performance of the Subject Vehicle (SV) by (i) specifying $N$ initial configurations and (ii) starting from each configuration observing a trajectory of states by executing a certain feedback control policy for the controllable factors. In this paper, we consider two classes of scenario sampling safety assurance problems, (i) the safety validation problem and (ii) the safety quantification problem.

Informally speaking, the \emph{scenario sampling safety validation problem} seeks to validate the given safe operable domain (or intended functions) of the ADS/ADAS-equipped vehicle. This is structurally studied under the so-called Safety of the Intended Functionality (SOTIF) as part of ISO standards~\cite{kirovskii2019driver}. Moreover, Riedmaier~et al.~\cite{riedmaier2020survey} has classified the scenario sampling safety validation algorithms into two categories, the testing-based and the falsification-based methods~\cite{corso2020survey}. The testing-based methods seek sufficient coverage of all domain-admitted scenarios, whereas the falsification-based methods focus on finding corner-case incidents. Suppose the given operational domain is not valid. The falsification-based validation algorithm is typically more efficient than the testing-based method in validating the underlying safety property, given its biased focus on safety-critical events. Typical falsification-based methods include solutions using expert knowledge~\cite{bagschik2018ontology}, model insights~\cite{capito2020modeled, klischat2020scenario}, adaptive sampling schemes~\cite{feng2020adaptive, feng2020testing}, and various learning-based frameworks~\cite{chen2020adversarial, ding2020learning}. On the other hand, if the given domain is indeed sufficiently safe, both methods will require a certain number of samples to claim the safety property with a high confidence level. This inspires some work in testing scenario library generation~\cite{bagschik2018ontology, feng2020testing}. However, the scenario defined in those studies is typically based on abstracted features with concrete scenario designs, which may not capture the complex dynamic propagation of traffic scenes. This further leads to various problems as discussed in~\cite{hauer2020re, capito2020modeled}. In the on-road testing regime, Karla~et al. provide a finite-mile guarantee with a high confidence level that assures the SV having a certain fatality rate~\cite{kalra2016driving}. However, to the best of our knowledge, it is still unclear how the number of samples quantitatively relates to the scenario sampling methods' safety validation outcome. 

Furthermore, simply validating the given claim with a Boolean-type answer is not always sufficient. Especially when the claimed safe operable domain is invalid, one may also want to explore various scenarios and identify possible sub-domains that could still induce the described safety property. This is referred to as the \emph{scenario sampling safety quantification problem}, which potentially also comes with the testing-based and the falsification-based methods. However, establishing a quantification algorithm that provably finds a particular safe operable domain is not easy. Some seemingly working solutions could still fail even if one samples infinitely many scenarios, as we will show in Section~\ref{sec:quantification}. The outcome of the quantification algorithm is of practical value in comparing the performance of different ADS/ADAS-equipped vehicles, and can also be applied to the issues raised by the Object and Event Detection and Response (ODER)~\cite{thorn2018framework}. 
 
To help address the aforementioned problems, the scenario sampling methods have been implemented in computer simulations~\cite{kluck2018using}, augmented reality test environment~\cite{feng2021intelligent}, and real-world experiments~\cite{forkenbrock2015nhtsa} through various techniques such as the expert knowledge based scenario library~\cite{putz2017system, althoff2017commonroad}, the naturalistic behavioral data driven scenario extraction~\cite{feng2021intelligent}, the adaptive stressing test~\cite{koren2018adaptive, corso2019adaptive}, and the backtracking process algorithm~\cite{hejase2020methodology}. To a certain extent, all the solutions above ``assess" the safety performance. However, even in the ideal simulation environment capable of running a large number of scenarios and initializing scenarios at arbitrary configurations, some fundamental properties of the scenario sampling safety assurance problem remain unclear. For example:
\begin{enumerate}
    \item How many scenarios are sufficient to validate the claimed ODD?
    \item Is one guaranteed to find at least one safe sub-domain as the number of sampled scenarios tends to infinity?
    \item Is one guaranteed to find all safe sub-domains (if applicable) as the number of sampled scenarios tends to infinity?
\end{enumerate}
To the best of our knowledge, there is no rigorous answer to any of the questions above in the scenario sampling literature. These questions inspire our work, and the proposed solution addresses the presented problems and questions. Overall, the contributions of this paper are listed as follows.

\textbf{Problem formulation from the set invariance perspective}: For the first time, the scenario sampling safety assurance problems presented above are shown to be related to a class of data-driven set invariance validation and quantification problems. Such a novel problem formulation allows us to obtain the statistical safety assurance of ADS/ADAS-driven vehicles in a provably correct and empirically effective manner.

\textbf{The safety validation problem}: For the safety validation problem, we first study a group of sampling algorithms with completeness guarantees (i.e., the probability of solving the validation problem is either one or tends to one as a sufficient number of scenarios are sampled). Such a sampling ``sufficiency" is then quantified rigorously for various algorithms that are either resolution complete or probabilistic complete~\cite{karaman2011sampling}. Ultimately, the finite-sampling guarantee is provided to statistically validate the controlled forward invariant set, interchangeably, the provided ODD's authenticity for the ADS/ADAS equipped vehicle. 

\textbf{The safety quantification problem}: Inspired by the sampling based motion planning work~\cite{karaman2011sampling}, we divide the safety quantification problem into two sub-problems, (i) the feasible safety quantification problem and (ii) the optimal safety quantification problem. The corresponding solutions are then known as the probabilistic complete algorithms (i.e., the probability of finding a safe sub-domain tends to one as the number of sampled scenarios tends to infinity) and the asymptotically optimal algorithms (i.e., the probability of finding the optimal safe sub-domains tends to one as the number of samples tends to infinity), respectively. Furthermore, we present a series of algorithm examples that are (i) intuitively effective but probabilistic incomplete, (ii) probabilistic complete but sub-optimal, and (iii) theoretically guaranteed asymptotically optimal . The proposed asymptotically optimal  scenario sampling algorithm's effectiveness is demonstrated through black-box testing in high-fidelity simulations.

Note that the set invariance and the safety concept are closely related in principle. Many safety work have the set invariance idea embedded in the proposal, including the reachability analysis (RS)~\cite{mitchell2007comparing, althoff2014online, park2012model} and the control barrier function (CBF) related methods~\cite{ames2016control, chen2017obstacle, tuncali2018reasoning}. The proposed framework in this paper enhances the previous efforts in data-driven set invariance validation~\cite{wang2020scenario} and system stability in probability~\cite{tsien1954engineering-probstability, dellnitz1999approximation}, and makes case-specific improvements for the particular problem of ADS safety assurance. Although both backward reachability analysis (BRS) and our proposed solution derive a certain forward invariant set which serves as the safe set (or its complement), BRS is a ``model-based" method relying on the explicit solution of the Hamilton-Jacobi-Bellman (HJB) equation to obtain the set{~\cite{mitchell2007comparing}}. Our proposed method takes a ``sampling-based" approach in approximating the robustly controlled forward invariant set. Such a methodological difference also makes the proposed method compatible with more complex problems such as the non-Lipschitz dynamics and the non-convex set of failure events. On the other hand, the forward reachability analysis (FRS){~\cite{althoff2014online}} is primarily focusing on developing local solutions for any given state, hence the forward reachable set does not necessarily induce the overall safety property{~\cite{mitchell2007comparing}}. Furthermore, the aforementioned methods (BRS, FRS, and CBF) typically take advantage of the invariant set to derive the safe controller, while the proposed method is mainly concerned with the testing and evaluation of an existing system.
Finally, it is also worth mentioning that Nahhal et al.~\cite{nahhal2007test} cast the safety validation problem as a path planning problem, which is also related to the safety quantification problem formulation presented in Section~\ref{sec:quantification}. However, the path planning perspective considered by Nahhal et al. primarily focuses on searching for failures from the falsification perspective. Such a sampled observation fails to capture the overall safety performance of the presented ADS. Meanwhile, the proposed safety quantification problem (see Problem~\ref{prob:feasible-qnt} and Problem~\ref{prob:optimal-qnt} in Section~\ref{sec:prelimilaries}) comes with more complete safety property characterizations.


\begin{figure*}[!t]
    \centering
    \includegraphics[width=0.95\linewidth]{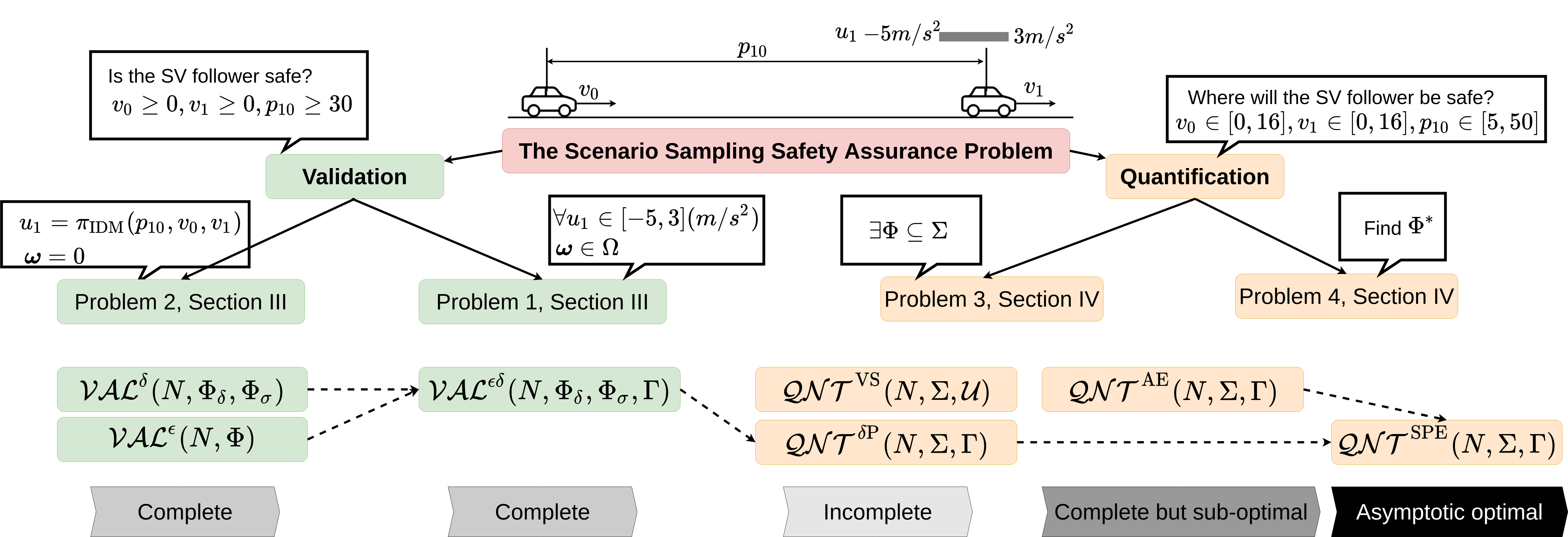}
    \caption{An overview of the formulated problems and presented algorithms in Section~\ref{sec:validation} and Section~\ref{sec:quantification}. Consider the lead-vehicle following case with the potential robustly controlled forward invariant set (i.e., the safe ODD) defined over the three-dimensional state space. (i) The scenario sampling safety assurance problem in this case can be further divided into a validation problem and a quantification problem. (ii) The validation problem asks questions such as ``Is the SV follower safe for initial following distance greater than 30 meters, if the lead vehicle follows the IDM based policy and the uncertainties are zero?" (Problem~\ref{prob:val_dp}), and ``Is the SV follower safe for initial following distance greater than 5 meters assuming an explicit lead vehicle policy with its longitudinal acceleration capability being subject to a known saturation?" (Problem~\ref{prob:val}). (iii) The quantification problem raises questions such as ``Can you find a safe following distance $p$ such that for all possible velocities for both vehicles the SV follower remains collision-free?" (Problem~\ref{prob:feasible-qnt}), and `Can you find the minimum safe following distance $p^*$ such that for all possible velocities for both vehicles the SV follower remains collision-free?" (Problem~\ref{prob:optimal-qnt}). (iv) A series of algorithms of various properties are presented for both problems. The validation algorithms and the quantification algorithms are closely related with relative dependencies as we will also address in Remark~\ref{rmk:unify_val_and_qnt}.}
    \label{fig:overview}
\end{figure*}

An overview of the problems and algorithms presented and analyzed in this paper is shown in Figure~\ref{fig:overview}. The construction of this paper is as follows. Section~\ref{sec:prelimilaries} revisits the set invariance concepts from the control literature, formulates the scenario definition from the dynamic system perspective, and establishes the safety definition from the set invariance perspective. This further leads to two classes of scenario sampling safety assurance problems. The safety validation problem is detailed in Section~\ref{sec:validation}. The safety quantification problem discussed in Section~\ref{sec:quantification} partially builds upon the content in Section~\ref{sec:validation}, but further escalates the challenges with theoretically guaranteed and empirically validated solutions provided. Finally, Section~\ref{sec:conclusion} summarizes the paper.

\begin{remark}
    None of the presented techniques in this paper are prohibiting the algorithms from working with high-dimensional system with high-cardinality sets. The particular case shown in Figure~\ref{fig:overview} and also studied in Section~\ref{sec:quantification} is only for proof of concept. 
\end{remark}

\textbf{Notations: } The set of real and positive real numbers are denoted by $\R$ and $\R_{>0}$ respectively. $\Z$ denotes the set of all positive integers and $\Z_N=\{1,\ldots,N\}$. The $\ell_{\infty}$-norm is denoted by $\norm{\cdot}$. $|\mathcal{X}|$ is the cardinality of the set $\mathcal{X}$. The function $d(\mathbf{p}, \mathcal{X})$ denotes the point-to-set distance between a point $\mathbf{p} \in \R^n$ and a set $\mathcal{X} \subseteq \R^n$ as $d(\mathbf{p}, \mathcal{X}) = \underset{\mathbf{x}\in\mathcal{X}}{\min}\norm{\mathbf{p}-\mathbf{x}}$. Let $\partial\mathcal{X}$ be the boundary of $\mathcal{X}$. Consider the signed point-to-set distance as
\begin{align*}
    d_s(\mathbf{p}, \mathcal{X}) = \begin{cases}
    d(\mathbf{p}, \partial\mathcal{X}), & \text{if }\mathbf{p} \notin \mathcal{X}\\
    -d(\mathbf{p}, \partial\mathcal{X}), & \text{otherwise. }
    \end{cases}
\end{align*}
A continuous function $\alpha:[0,a)\rightarrow[0,\infty)$ for some $a>0$ is a class $\mathcal{K}$ function if it is strictly increasing and $\alpha(0)=0$.

\section{Preliminaries} \label{sec:prelimilaries}

\subsection{Revisit set invariance}
Consider a general nonlinear system:
\begin{equation}\label{eq:s-dyn}
    \s(t+1) = f(\s(t); \boldsymbol\omega(t)),
\end{equation}
where the state $\s \in \Ss \subseteq \R^n$, the disturbance $\boldsymbol\omega \in \W \subseteq \R^w$. Both $\Ss$ and $\W$ are compact. The time step $t \in \Z$ and transition function $f:\Ss \times \W \rightarrow \Ss$ is Lipschitz continuous.
\begin{definition}\label{def:robust-fi}
    \textbf{(Robustly forward invariant set)} The set $\Phi \subseteq \Ss$ is robustly forward invariant for the system~\eqref{eq:s-dyn} if for all $\s(0) \in \Phi$ and all $\boldsymbol\omega(t)\in\W$ the solution satisfies $\s(t) \in \Phi, \forall t > 0$.
\end{definition}
Furthermore, by adding control capability to the system~\eqref{eq:s-dyn}, we have the nonlinear control system as
\begin{equation}\label{eq:su-dyn}
    \s(t+1) = f(\s(t), \uu(t); \boldsymbol\omega(t)),
\end{equation}
where the action $\uu \in \mathcal{U}$ and transition function is modified as $f: \Ss \times \mathcal{U} \times \W \rightarrow \Ss$.
\begin{definition}\label{def:robust-cfi}
    \textbf{(Robustly controlled forward invariant set)} The set $\Phi \subseteq \Ss$ is robustly controlled forward invariant for the system~\eqref{eq:su-dyn} if for all $\s(0) \in \Phi$, all $\boldsymbol\omega(t)\in\W$, there exists a nonempty set $ \Gamma := \Gamma(\boldsymbol\sig(t)) \subseteq \mathcal{U}$, such that for all $\uu(t) \in \Gamma$ the solution of~\eqref{eq:su-dyn} satisfies $\s(t) \in \Phi, \forall t > 0$.
\end{definition}
\begin{remark}\label{rmk:Gamma_t}
    The above definition is more general than a typical controlled forward invariant set definition~\cite{blanchini1999set}, where the latter only requires the existence of a single control action $\uu(t)\in\mathcal{U}$. Furthermore, as indicated by definition, the admissible action space $\Gamma(\boldsymbol\sig(t))$ is time-variant and state-dependent in general. In the remainder of this paper, we mostly consider the time-invariant $\Gamma$ unless specified otherwise.
\end{remark}


\subsection{The scenario system}
In this paper, we consider a similar definition of a scenario to~\cite{capito2020modeled} where we define the testing scenario as a dynamic system subject to the motion equation of~\eqref{eq:su-dyn}. With a little abuse of notation, we formally present the scenario dynamics as:
\begin{equation}\label{eq:snapshot-dyn}
    \boldsymbol\sig(t+1) = f(\boldsymbol\sig(t), \uu(t); \boldsymbol\omega(t)).
\end{equation}
The state $\boldsymbol\sig \in \Sigma \subseteq \R^n$ denotes all dynamic and static state properties of all participants. This not only includes the typical dynamic states of the traffic participants (e.g., position and velocity of all vehicles), but could also involve the road-level descriptions (e.g. surface friction), the traffic infrastructures (e.g. lightning condition), and the environment factors (e.g., weather). Correspondingly, $\uu \in \mathcal{U}$ represents all actions capable of controlling the states above, with only one exception being the SV control. In the testing scenario context, the SV is often treated as a black-box system, with only part of its states being observable. Finally, $\boldsymbol\omega(t) \in \W$ includes system disturbances and potential uncertainties, which also admits the following assumption.
\begin{assumption}
    Assume $\boldsymbol\omega(t) \in \W$ is uniformly bounded, i.e., there exists $\bar{\omega} > 0$ such that $\norm{\boldsymbol\omega(t)} \leq \bar{\omega}, \forall t \in \Z$. 
\end{assumption}
The above property is well-adopted in the robust control literature and it remains valid for many commonly observed factors in the ADS applications such as the tire friction, air resistance, and road gradients.
\begin{remark}\label{rmk:underactuation}
    Note that the absence of SV control action in $\uu(t)$ makes the testing scenario conceptually close to an \emph{``underactuated"} system. While the particular underlying SV policy remains unknown, its action is often uniformly bounded. Hence one can also treat it as part of the disturbance $\boldsymbol\omega(t)$. 
\end{remark}

\begin{remark}
    The control action $\uu$ and the disturbance $\boldsymbol\omega$ are interchangeable. For example, the tire friction can be perfectly controlled in computer simulators, but it is commonly considered as a source of uncertainty in real-world tests. 
\end{remark}

Within the intelligent vehicle context, various scenario definitions have been proposed. Ulbrich~et al.~\cite{ulbrich2015defining} define the scenario as a temporal sequence of scene elements, with actions and events of the participating elements occurring within this sequence. As an extension, Menzel~et al.~\cite{menzel2018scenarios} further classifies the scenario into three abstractive levels, i.e., the functional, the logical, and the concrete scenarios. While the abstracted scenarios are mostly aligning with natural intuitions, the simplification may not be sufficient for the safety assurance purpose as pointed out in~\cite{hauer2020re, capito2020modeled}. On the other hand, Capito~et al. formulate the scenario as an automaton (Definition 1 in~\cite{capito2020modeled}). With the formal language of the automaton theory, they further define the ``run of a scenario" being a chronological sequence of snapshots of traffic states. As a result, the scenario design problem then seeks to establish a feedback controller for all traffic participants other than the Subject Vehicle (SV) (where the SV is treated as a black-box system in general). This is fundamentally equivalent to the scenario definition provided in this paper, and the run of a scenario in~\cite{capito2020modeled} is essentially a trajectory of state points propagated through the dynamics~\eqref{eq:snapshot-dyn} supplied with an initialization condition $\boldsymbol\sigma(0)$ and a certain feedback control policy $\uu(t)=\pi(\boldsymbol\sig(t)), \pi: \Sigma\rightarrow \mathcal{U}$.

The methodology derived in this paper is primarily based on the scenario dynamics~\eqref{eq:snapshot-dyn}. It is still transferable to the abstracted scenario models with less complex designs, yet details are out of the scope of this paper.

\subsection{Safety assurance with scenario sampling}
Combing the set invariance definitions with the scenario dynamics~\eqref{eq:snapshot-dyn}, we formally present the safety definition as follows.
\begin{definition}\label{def:safety}
    \textbf{(Safety)} The subject vehicle is safe in $\Phi \subseteq \Sigma$ for the system~\eqref{eq:snapshot-dyn} if $\Phi$ is robustly controlled forward invariant for~\eqref{eq:snapshot-dyn}.
\end{definition}

\begin{remark}
    Note that the set invariance property is only a necessary condition for safety. Under certain conditions, the controlled forward invariance property could also apply to unsafety (e.g. the set of inevitable collisions).
\end{remark}

It is straightforward that the Definition~\ref{def:safety} aligns with the common sense of safety. For example, in the functional safety domain, safety is defined as \emph{the freedom from unacceptable risk of physical injury, of damage to the health of people and the environment}~\cite{kirovskii2019driver}. This is essentially equivalent to Definition~\ref{def:safety} if one considers $\Phi$ as the complement of all events with unacceptable risk.

Rigorous safety performance assessment of the SV with the dynamic propagation of~\eqref{eq:snapshot-dyn} is challenging. With falsification-based methods, one can only derive counter-examples and does not obtain the complete safety performance profile. On the other hand, the testing-based solution also suffers from two problems. First, existing work lacks rigorous quantification of ``the sufficient coverage" of all state-action pairs. Moreover, simply visiting a state-action pair once is not necessarily enough due to the ``underactuation" addressed by Remark~\ref{rmk:underactuation} and various uncertainties dictated by $\boldsymbol\omega$ in~\eqref{eq:snapshot-dyn}. Similar observations are also illustrated in Figure~\ref{fig:lead_follow_demo} and identified by Hauer~et al.~\cite{hauer2020re} from the regression test perspective. As a result, most existing work seek to assess safety with ``abstracted" scenarios where a finite-dimensional feature vector is adopted to simplify the scenario description. However, the lack of analytical sampling analysis for the scenario sampling algorithm remains a problem for scenarios defined in lower-dimensions and more complex sample space. Even if one can arrive at a certain theoretically guaranteed property w.r.t. the simplified scenario definition, the analyses remain questionable for the more general scenario dynamics. 

The proposed framework in this paper directly considers the general scenario propagation as defined by~\eqref{eq:snapshot-dyn}. A typical safety assurance algorithm with scenario sampling always takes the form of sampling $N$ runs of scenarios. A run of a scenario is then formally described through Algorithm~\ref{alg:run_scene} along with Assumption~\ref{asp:run_scene}. 

\begin{algorithm}[H]
    \begin{algorithmic}[1]
    \State {\bf Input:} The initial state $\boldsymbol\sigma_0$, the trajectory horizon $K$, the feedback control policy $\pi: \Sigma\rightarrow \mathcal{U}$ for the scenario propagation of~\eqref{eq:snapshot-dyn}.
    \State {\bf Initialize: } $i=1$, $\boldsymbol\sig=\boldsymbol\sig_0$, trajectory $\mathcal{T}=\{\boldsymbol\sig_0\}$.
    \State {{\bf While} $i < K$:}
    \State {\ \ \ \ $\uu = \pi(\boldsymbol\sig)$}
    \State {\ \ \ \ get $\boldsymbol\sigma'$ from simulating dynamics~\eqref{eq:snapshot-dyn}}
    \State {\ \ \ \ $\mathcal{T}.\text{append}(\boldsymbol\sig'$)}
    \State {\ \ \ \ $\boldsymbol\sig=\boldsymbol\sig'$}
    \State {\ \ \ \ $i$+=$1$}
    \State {{\bf Output:} $\mathcal{T}$}
    \end{algorithmic}
    \caption{Run of a scenario $\mathcal{RS}(\boldsymbol\sigma_0, K, \pi(\cdot))$} \label{alg:run_scene}
\end{algorithm}
As the focus of this paper is the scenario sampling approach, which primarily appears in computer simulators. The following assumption is standard.
\begin{assumption}\label{asp:run_scene}
    Assume Algorithm~\ref{alg:run_scene} takes arbitrary $\boldsymbol\sigma_0\in\Sigma$ and $K\in[2,\infty)$, i.e., one can ``reset" the run of a scenario to an arbitrary initial state and each run of scenario returns a finite-length trajectory. Furthermore, assume one can execute infinitely many runs of scenarios. 
\end{assumption}
Although the above assumption is not strictly valid in real-world tests, it is applicable in the computer simulations, which is the primary focus of this paper.
\begin{remark}
    In general, most statistical analyses requires that the feedback control policy $\pi(\cdot)$ generates actions that are independent and identically distributed (i.i.d.) w.r.t. the distribution $\mathbb{P}$ on the admissible action space $\Gamma \subseteq \mathcal{U}$. It is thus a natural choice considering $\uu \sim U(\Gamma)$. Some also consider $\pi(\cdot)$ as a more specific policy with a deterministic or stochastic mapping to $\mathcal{U}$ inspired by naturalistic behavioral studies and model insights as we will show in Section~\ref{sec:validation}.
\end{remark}

For both the safety validation problem in Section~\ref{sec:validation} and the safety quantification problem in Section~\ref{sec:quantification}, we first derive problem formulations and possible solutions that at least ``deserve" the testing effort. That is, as more runs of scenarios are sampled, one is guaranteed to have more accurate safety assurance results and higher confidence regarding such assessment. We then derive finite-sampling analysis to analytically quantify the relation between the number of samples and the assessment outcome. Finally, we seek to improve the proposed algorithms with better sampling efficiency and time complexity without jeopardizing any obtained theoretical properties.

\section{The Safety Validation Problem}\label{sec:validation}
By Definition~\ref{def:safety}, ensuring the safety of SV in the given domain becomes closely related to validating if the domain is robustly controlled forward invariant. However, it is practically difficult to perfectly ensure the set invariance property without any knowledge of the dynamical model. The data-driven scenario sampling approach is thus adopted and is expected to solve the following problem. 

\begin{problem}\label{prob:val}
    \textbf{(Scenario Sampling Safety Validation Problem)} Given the scenario dynamics~\eqref{eq:snapshot-dyn}, $\Phi \subseteq \Sigma$ and $\Gamma \subseteq \mathcal{U}$, a \emph{scenario sampling safety validation} problem seeks to design a scenario sampling algorithm
    \begin{equation}
        \mathcal{VAL}(N, \Phi, \Gamma), \mathcal{VAL}: \Z \times \Sigma \times \mathcal{U} \rightarrow \{\text{True}, \text{False}\}
    \end{equation}
    with $N \in \Z$ samples. With a sufficiently large $N$, $\mathcal{VAL}(N, \Phi, \Gamma)$ is expected to determine whether $\Phi$ is robustly controlled forward invariant w.r.t. $\Gamma$.
\end{problem}

In practice, the control of the scenario propagation is not always vaguely defined through $\Gamma$. Some existing methodologies give specific deterministic testing policies. This is commonly observed in ADAS operational safety tests by the European New Car Assessment Program (NCAP)~\cite{van2017euro} and other government sectors. For ADS safety testing, the scenario is also concretely defined, such as in~\cite{feng2020testing, bagschik2018ontology}. Formally speaking, this creates the following simplified safety validation problem. 
\begin{problem}\label{prob:val_dp}
    \textbf{(Deterministic Scenario Sampling Safety Validation Problem)} Consider Problem~\ref{prob:val} with modified settings in three ways:
    \begin{enumerate}
        \item Assume the underlying unknown SV policy is time-invariant and deterministic.
        \item Let the feedback control policy of the scenario dynamics~\eqref{eq:snapshot-dyn} be time-invariant and deterministic, i.e., $\uu = \pi(\boldsymbol\sig), \pi: \Sigma \rightarrow \mathcal{U}$, or equivalently, $\Gamma(\boldsymbol\sig) = \{\pi(\boldsymbol\sig)\}$.
        \item Assume zero uncertainties in~\eqref{eq:snapshot-dyn}, i.e., $\boldsymbol\omega(t)=0, \forall t$.
    \end{enumerate}
    Correspondingly, we have the modified system of~\eqref{eq:snapshot-dyn} as 
    \begin{equation}\label{eq:snapshot-dynmod}
        \boldsymbol\sig(t+1)=f(\boldsymbol\sig(t), \pi(\boldsymbol\sig(t)))=g(\boldsymbol\sig(t)).
    \end{equation}
    As a result, with a sufficiently large $N$, $\mathcal{VAL}(N, \Phi, \Gamma)$ is expected to determine whether $\Phi$ is forward invariant for~\eqref{eq:snapshot-dynmod}.
\end{problem}
The listed assumptions essentially transfer a controlled forward invariant set validation problem to an invariant set validation problem by the above formulation.

For both Problem~\ref{prob:val} and Problem~\ref{prob:val_dp}, the essential property of the corresponding solution is to guarantee the so-called ``completeness" and obtain an ``almost" set invariance guarantee. In general, a complete algorithm returns a solution (or outcome) if one exists. An almost invariant set is a set that is mostly invariant except for an arbitrarily small subset. Depending on how one defines such an ``arbitrarily small subset", one can have either a resolution complete algorithm or a probabilistic complete algorithm. Both are presented in detail to solve Problem~\ref{prob:val_dp} in the next two subsections. The two different methodologies for achieving completeness and almost set invariance guarantee are then merged to solve the more complex Problem~\ref{prob:val}. Finally, some techniques that provably accelerate the proposed complete safety validation algorithms are presented.

\subsection{Resolution complete safety validation for Problem~\ref{prob:val_dp}}
Intuitively, a \emph{resolution complete} safety validation algorithm divides the given set $\Phi$ to multiple grids and validates each grid individually. The completeness is asymptotically obtained as one makes smaller subdivisions. The division of the given set is formally defined in Definition~\ref{def:delta-coverage} and is also conceptually illustrated in Figure~\ref{fig:delta-coverage}.
\begin{definition}\label{def:delta-coverage}
    \textbf{($\delta$-Covering Set)} Given $\delta\in\R_{>0}$ and $\Phi \subseteq \Sigma$. Let $\mathcal{N}_{\delta}(\boldsymbol\sig)$ be the $\delta$-neighborhood of $\boldsymbol\sig$, i.e., for all $\boldsymbol\sig' \in \mathcal{N}_{\delta}(\boldsymbol\sig), \norm{\boldsymbol\sig-\boldsymbol\sig'} \leq \delta$. The $\delta$-covering set $\Phi_{\delta}$ of $\Phi$ then satisfies
    \begin{equation}
        \Phi_{\delta}=\underset{i\in\Z_n}{\bigcup} \mathcal{N}_{\delta}(\boldsymbol\sigma_i) \supseteq \Phi,  \Phi_{\sigma}=\{\boldsymbol\sig_i\}_{i\in\Z_n} \subseteq \Phi
    \end{equation}
\end{definition}

\begin{figure}
    \centering
    \includegraphics[width=0.4\textwidth]{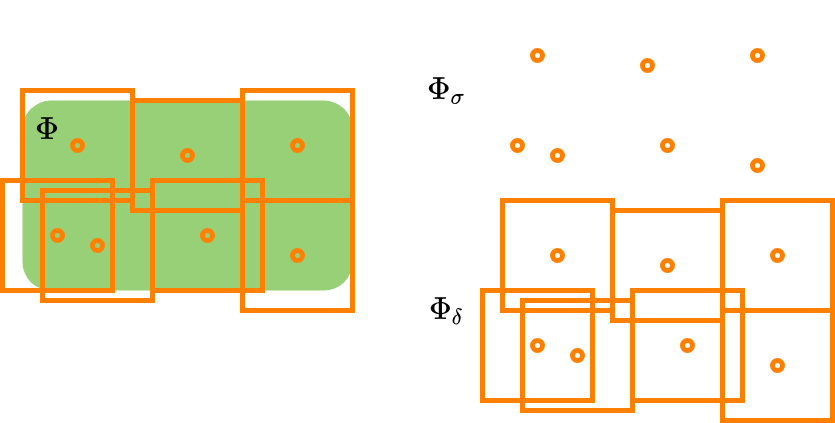}
    \caption{An example of the $\delta$-covering set on $\Phi \subseteq \R^2$.}
    \label{fig:delta-coverage}
    \vspace{-5mm}
\end{figure}

\begin{remark}
    Note that Definition~\ref{def:delta-coverage} is presented with a scalar value of $\delta \in \R_{>0}$ in conjunction with the $\ell_{\infty}$-norm selected for distance measure. The idea of the $\delta$-coverage can be further extended to work with the vector $\boldsymbol\delta\in\R_{>0}^n$ for high dimensional state space, but is out of the scope of this paper.
\end{remark}

We then have the formal definition of the \emph{$\delta$-almost robustly forward invariance} in Definition~\ref{def:delta-almost-inv}.
\begin{definition}\label{def:delta-almost-inv}
    \textbf{($\delta$-Almost Robustly Forward Invariance)} Given $\delta\in\R_{>0}$, $\Phi \subseteq \Sigma$, and all assumptions in Problem~\ref{prob:val_dp}. The set $\Phi$ is $\delta$-almost robustly forward invariant for the system~\eqref{eq:snapshot-dynmod} if there exists a $\delta$-covering set $\Phi_{\delta}$ of $\Phi$ with $\Phi_{\sigma}$, such that 
    \begin{equation}\label{eq:delta-almost-in}
        \mathbb{P}\Big(\big\{\boldsymbol\sig \in \Phi_{\sigma} : g(\boldsymbol\sigma,\uu) \in \Phi_{\delta}\big\}\Big) = 1.
    \end{equation}
\end{definition}

\begin{remark}\label{rmk:delta-helps}
    One practical advantage of introducing the $\delta$-covering set is that the ``point-in-set" check can be approximated as a point-to-set distance check. For example, identifying whether $g(\boldsymbol\sigma,\uu) \in \Phi_{\delta}$ is essentially checking whether $d(g(\boldsymbol\sigma,\uu), \Phi_{\delta})\leq \delta$. Such a simplification does not rely on the analytical geometric structure of $\Phi$, which can be very complex in practice. It helps solve the validation and, more importantly, the quantification problems effectively, as we will introduce later.
\end{remark}

\begin{algorithm}
    \begin{algorithmic}[1]
    \State {\bf Input:} $N, \Phi_{\delta}, \Phi_{\sig}$, the trajectory horizon $K$.
    \State {\bf Initialize: } $i=1$, done=False.
    \State {{\bf While} $i < N$ and not done:}
    \State {\ \ \ \ $\boldsymbol\sigma_0 = \Phi_{\sig}[i]$, $\mathcal{T} = \mathcal{RS}(\boldsymbol\sigma_0, K, \pi(\cdot))$}
    \State {\ \ \ \ $\boldsymbol\sigma=\mathcal{T}[2], j=2$}
    \State {\ \ \ \ {\bf While} $j < K$:}
    \State {\ \ \ \ \ \ \ \ {\bf If} $\mathcal{T}[j] \notin \Phi_{\delta}$}
    \State {\ \ \ \ \ \ \ \ \ \ \ \ done=True}
    \State {\ \ \ \ \ \ \ \ \ \ \ \ \textbf{break}}
    \State {\ \ \ \ \ \ \ \ {\bf End If}}
    \State {\ \ \ \ \ \ \ \ $j$+=$1$}
    \State {\ \ \ \ $i$+=$1$}
    \State {{\bf Output:} $\text{True}$ if $i=N$ else $\text{False}$}
    \end{algorithmic}
    \caption{$\delta$-Almost Safety Validation $\mathcal{VAL}^{\delta}(N, \Phi_{\delta}, \Phi_{\sig})$} \label{alg:val_delta}
\end{algorithm}

The $\delta$-covering set that satisfies the property by Definition~\ref{def:delta-almost-inv} is not necessarily unique. Algorithm~\ref{alg:val_delta} details the safety validation solution for a given $\delta$. Obviously, as $\delta$ tends to zero, one is expected to validate the invariance property of $\Phi$. The analytical correlation among the validation outcome by Algorithm~\ref{alg:val_delta}, the choice of $\delta$, and the number scenarios required are specified by Theorem~\ref{thm:res-complete-val} and Lemma~\ref{lma:res-complete-val}. 
\begin{theorem}\label{thm:res-complete-val}
    Let $E_{\Phi}$ be the event of $\Phi$ being not forward invariant for~\eqref{eq:snapshot-dynmod}. Algorithm~\ref{alg:val_delta} is resolution complete, i.e.,
    \begin{equation}
        \limsup_{\delta \rightarrow 0}\mathbb{P} \Bigg( \bigg\{ \mathbb{P}\big(E_{\Phi} \big) \leq \frac{(2\delta)^n}{|\Phi|} \bigg\} \Bigg) = 1.
    \end{equation}
\end{theorem}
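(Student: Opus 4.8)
The plan is to turn the finite collection of grid-point checks made by Algorithm~\ref{alg:val_delta} into a Lebesgue-measure bound on the set of states that violate invariance, and then let the resolution $\delta$ shrink. First I would note that under the hypotheses of Problem~\ref{prob:val_dp} the closed-loop map $g(\boldsymbol\sig)=f(\boldsymbol\sig,\pi(\boldsymbol\sig))$ in~\eqref{eq:snapshot-dynmod} inherits Lipschitz continuity from $f$, with some constant $L$; this is the only regularity the argument consumes. I would then introduce the escaping set $E_{\Phi}=\{\boldsymbol\sig\in\Phi:\ g(\boldsymbol\sig)\notin\Phi\}$ and recall that, for the deterministic system~\eqref{eq:snapshot-dynmod}, $g(\Phi)\subseteq\Phi$ is equivalent to forward invariance by induction on $t$, so $\Phi$ fails to be invariant exactly when $E_{\Phi}$ is nonempty. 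Identifying the inner quantity $\mathbb{P}(E_{\Phi})$ with the normalized Lebesgue measure $|E_{\Phi}|/|\Phi|$ induced by uniform sampling over $\Phi$ puts the target inequality in the purely geometric form $|E_{\Phi}|\le (2\delta)^n$.

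The core is a covering argument. By Definition~\ref{def:delta-coverage} each neighborhood $\mathcal{N}_{\delta}(\boldsymbol\sig_i)$ is, in the $\ell_\infty$ metric, a cube of side $2\delta$ and volume $(2\delta)^n$, and these cubes cover $\Phi$ with one sample $\boldsymbol\sig_i\in\Phi_{\sigma}$ apiece. Algorithm~\ref{alg:val_delta} returns True precisely when every sampled trajectory stays inside $\Phi_{\delta}$, which in particular forces $g(\boldsymbol\sig_i)\in\Phi_{\delta}$ for all $i$. I would argue the contrapositive: if a violation is fat enough to contain a full $\ell_\infty$-ball of radius $\delta$ (volume $(2\delta)^n$), then, because the covering places a grid point within distance $\delta$ of every point of $\Phi$, that ball captures some $\boldsymbol\sig_i$; running the scenario from that $\boldsymbol\sig_i$ yields $g(\boldsymbol\sig_i)\notin\Phi$, and the Lipschitz estimate $\norm{g(\boldsymbol\sig)-g(\boldsymbol\sig_i)}\le L\delta$ together with the point-to-set reformulation of membership in Remark~\ref{rmk:delta-helps} pushes the excursion past the $\delta$-shell so that $g(\boldsymbol\sig_i)\notin\Phi_{\delta}$, tripping the break in Algorithm~\ref{alg:val_delta} and returning False. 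Hence a True verdict certifies $|E_{\Phi}|\le(2\delta)^n$, i.e. $\mathbb{P}(E_{\Phi})\le (2\delta)^n/|\Phi|$.

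It remains to supply the outer probability and the limit. Treating the $\delta$-covering $\Phi_{\sigma}$ as drawn by repeated resets and runs --- legitimate by Assumption~\ref{asp:run_scene}, which grants arbitrarily many runs from arbitrary initial states --- the event inside the outer $\mathbb{P}$ is the realization in which sampling both attains a genuine $\delta$-cover and returns True; the per-realization bound above shows that on this event $\mathbb{P}(E_{\Phi})\le (2\delta)^n/|\Phi|$ holds deterministically. As $\delta\to 0$ the cover becomes unavoidable and the probability of this event tends to one, so $\limsup_{\delta\to0}\mathbb{P}(\{\mathbb{P}(E_{\Phi})\le (2\delta)^n/|\Phi|\})=1$; since the right-hand side $(2\delta)^n/|\Phi|$ itself collapses to $0$, the same limit simultaneously certifies exact forward invariance, which is the content of resolution completeness.

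The hard part will be the detection step, namely upgrading a merely positive-measure violation into one that actually captures a grid point and, more subtly, into an excursion that leaves the dilated set $\Phi_{\delta}$ rather than only $\Phi$. The delicacy is exactly the gap between $\Phi$ and $\Phi_{\delta}$: a trajectory may exit $\Phi$ yet remain within distance $\delta$ and thus evade the test, which is why the certificate is an almost-invariance with the one-cell slack $(2\delta)^n$ rather than an exact statement for fixed $\delta$. Making this rigorous requires controlling, through the Lipschitz constant $L$ and the horizon $K$, how far a violation at an interior state propagates to a detectable trajectory point, and tying the geometric assumption that $E_{\Phi}$ contains an $\ell_\infty$-ball of radius $\delta$ to the openness of the condition $g(\boldsymbol\sig)\notin\Phi$ inherited from continuity of $g$.
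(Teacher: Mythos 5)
Your core idea --- cover $\Phi$ by roughly $|\Phi|/(2\delta)^n$ cells of volume $(2\delta)^n$, observe that a \emph{True} verdict confines any residual violation to at most one cell's worth of normalized measure, and let $\delta\to 0$ --- is exactly the counting observation the paper offers as its entire proof: it devotes one sentence to computing the ratio between $|\Phi|$ and the cell volume $(2\delta)^n$ and declares the result ``straightforward from observation.'' So at the level of the argument the paper actually gives, you are on the same route, and your identification of $\mathbb{P}(E_\Phi)$ with normalized Lebesgue measure and of each $\mathcal{N}_{\delta}(\boldsymbol\sig_i)$ with an $\ell_\infty$-cube of volume $(2\delta)^n$ is the right reading of the constants.

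The trouble lies in the extra machinery you add to certify the inequality $|E_\Phi|\le(2\delta)^n$, and you have correctly located but not closed the gap. Two links in your contrapositive fail as stated. First, $|E_\Phi|>(2\delta)^n$ does not imply that $E_\Phi$ contains an $\ell_\infty$-ball of radius $\delta$: a violating set can have large measure while being a thin slab or a scattered union threading between all grid points, so ``fat enough to capture a grid point'' is an additional geometric hypothesis, not a consequence of the measure bound you are trying to establish. Second, even when a grid point $\boldsymbol\sig_i$ does land in $E_\Phi$, the algorithm's test is $g(\boldsymbol\sig_i)\notin\Phi_{\delta}$ with $\Phi_{\delta}\supseteq\Phi$, so $g(\boldsymbol\sig_i)\notin\Phi$ need not trip the break; the Lipschitz estimate $\norm{g(\boldsymbol\sig)-g(\boldsymbol\sig_i)}\le L\delta$ controls how far images of nearby states drift apart but says nothing about how far $g(\boldsymbol\sig_i)$ lies outside $\Phi$, so it cannot ``push the excursion past the $\delta$-shell.'' What a True verdict plus Lipschitz continuity actually certifies is $g(\boldsymbol\sig)\in\Phi_{(1+L)\delta}$ for every $\boldsymbol\sig\in\Phi$, an $O(\delta)$-almost-invariance whose defect vanishes as $\delta\to 0$ --- enough for resolution completeness in the limit, but not the per-$\delta$ inequality $\mathbb{P}(E_\Phi)\le(2\delta)^n/|\Phi|$ asserted by the theorem. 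To be fair, the paper's one-line proof does not close this gap either; your write-up should either derive the one-cell bound purely from the counting argument, as the paper implicitly does, or weaken the intermediate claim to the $(1+L)\delta$ statement and take the limit from there.
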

Note that $|\Phi|$ is the cardinality of $\Phi$ and $n$ denotes the state space dimension defined in Section~\ref{sec:prelimilaries}.
\begin{lemma}\label{lma:res-complete-val}
    $\Phi$ is $\delta$-almost robustly forward invariant with probability one for~\eqref{eq:snapshot-dynmod} if
    \begin{equation}
        \mathcal{VAL}^{\delta}(N, \Phi_{\delta}, \Phi_{\sig})=\text{True} \text{ with }N \geq \frac{|\Phi|}{(2\delta)^n}.
    \end{equation}
\end{lemma}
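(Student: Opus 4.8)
The plan is to prove the lemma in two movements: a geometric covering-count that turns the hypothesis $N \ge |\Phi|/(2\delta)^n$ into the existence of a genuine $\delta$-covering set in the sense of Definition~\ref{def:delta-coverage}, and a direct reading of the one-step containment condition~\eqref{eq:delta-almost-in} of Definition~\ref{def:delta-almost-inv} off a \text{True} output of Algorithm~\ref{alg:val_delta}. First I would fix the geometry: since distances are measured in the $\ell_\infty$-norm, each $\delta$-neighborhood $\mathcal{N}_\delta(\boldsymbol\sigma_i)$ is an axis-aligned hypercube of side $2\delta$ and volume $(2\delta)^n$. A volume argument then gives both directions of the count: any collection of such neighborhoods whose union contains $\Phi$ must use at least $|\Phi|/(2\delta)^n$ of them, and, conversely, overlaying a regular grid of side $2\delta$ on $\Sigma$, keeping the cells that intersect $\Phi$, and selecting one representative center inside $\Phi$ per kept cell produces a set $\Phi_\sigma = \{\boldsymbol\sigma_i\} \subseteq \Phi$ of at most $\lceil |\Phi|/(2\delta)^n \rceil$ points with $\bigcup_i \mathcal{N}_\delta(\boldsymbol\sigma_i) \supseteq \Phi$. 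Hence the budget $N \ge |\Phi|/(2\delta)^n$ is exactly what is needed for the starting points $\Phi_\sigma[1], \dots, \Phi_\sigma[N]$ enumerated by Algorithm~\ref{alg:val_delta} to realize such a covering; this is the dual, sample-complexity counterpart of the per-cell resolution $(2\delta)^n/|\Phi|$ appearing in Theorem~\ref{thm:res-complete-val}.

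Next I would translate the algorithm's verdict. The output is \text{True} precisely when the outer loop exhausts its budget without ever setting \text{done}, that is, when for every processed start $\boldsymbol\sigma_0 = \Phi_\sigma[i]$ the entire sampled trajectory satisfies $\mathcal{T}[j] \in \Phi_\delta$. Because the system~\eqref{eq:snapshot-dynmod} is deterministic, $\mathcal{RS}$ is reproducible and $\mathcal{T}[2] = g(\boldsymbol\sigma_0)$, so a \text{True} output certifies in particular that $g(\boldsymbol\sigma_i) \in \Phi_\delta$ for every $\boldsymbol\sigma_i \in \Phi_\sigma$. Consequently $\{\boldsymbol\sigma \in \Phi_\sigma : g(\boldsymbol\sigma) \in \Phi_\delta\} = \Phi_\sigma$, whose mass under the sampling measure on the finite set $\Phi_\sigma$ equals one; this is exactly~\eqref{eq:delta-almost-in}, and Definition~\ref{def:delta-almost-inv} then delivers that $\Phi$ is $\delta$-almost robustly forward invariant with probability one. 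I would also remark that Definition~\ref{def:delta-almost-inv} only requires the single-step image to land in $\Phi_\delta$, so the multi-step trajectory check performed by Algorithm~\ref{alg:val_delta} verifies strictly more than the lemma needs.

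The step I expect to be the main obstacle is the covering construction near $\partial\Phi$: grid cells that straddle the boundary contribute to the count yet must still admit a representative center inside $\Phi$ while their $\delta$-neighborhoods cover the boundary slivers of $\Phi$. The inequality $\ge$ together with the ceiling absorbs this over-count, and the residual slack is confined to the measure-zero boundary, which is what keeps the statement at the level of $\delta$-\emph{almost} invariance rather than exact forward invariance. Finally, a minor bookkeeping point to reconcile is the loop's index range, which processes the enumerated points up to termination at $i = N$; I would handle this by taking the covering to be indexed by the processed cells and noting that the stated bound $N \ge |\Phi|/(2\delta)^n$ still suffices after this shift, so that the \text{True} output accounts for every member of $\Phi_\sigma$ and the probability-one conclusion in~\eqref{eq:delta-almost-in} is attained.
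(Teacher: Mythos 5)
Your proposal is correct and follows essentially the same route as the paper: the paper's (very terse) argument is exactly the volume-counting observation that $N \ge |\Phi|/(2\delta)^n$ hypercubes of side $2\delta$ suffice to realize a $\delta$-covering set, after which a \text{True} output of Algorithm~\ref{alg:val_delta} certifies $g(\boldsymbol\sigma_i)\in\Phi_\delta$ for every $\boldsymbol\sigma_i\in\Phi_\sigma$ and condition~\eqref{eq:delta-almost-in} holds trivially with probability one. Your write-up simply makes explicit the grid construction, the boundary-cell bookkeeping, and the observation that the multi-step trajectory check is stronger than the one-step condition the definition requires, none of which changes the underlying argument.
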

Proof of Theorem~\ref{thm:res-complete-val} is straightforward from observation, i.e., the minimum number of $\delta$-neighborhoods required for validation is calculated by the ratio between the cardinality of the sample space $|\Phi|$ and the estimated cardinality of the $\delta$-neighborhood $(2\delta)^n$. Lemma~\ref{lma:res-complete-val} is directly obtained from Theorem~\ref{thm:res-complete-val}. As the state space dimension $n$ grows large, e.g. the scenario system involving more vehicles and controllable factors, the required number of samples $N$ dictated by Lemma~\ref{lma:res-complete-val} also significantly increases. Technically, Algorithm~\ref{alg:val_delta} is not a scenario sampling algorithm, given the completeness guarantee in Theorem~\ref{thm:res-complete-val} requires a explicit number of samples and the confidence level is one as long as each $\delta$-neighborhood is visited. Yet the analysis still holds its value for various proposed algorithms later.

\subsection{Probabilistic complete safety validation for Problem~\ref{prob:val_dp}}
The probabilistic complete solution for Problem~\ref{prob:val_dp} takes a different approach from the resolution complete methodology discussed above. Definition~\ref{def:delta-almost-inv} specifies the arbitrarily small subset through $\delta$, which comes with a strict topological interpretation through the $\delta$-covering set. On the other hand, the $\epsilon$-almost robustly controlled forward invariance definition specifies the small subset in a Monte Carlo manner. The primary technique discussed in this subsection adapts from~\cite{wang2020scenario}.

\begin{definition}\label{def:epsilon-almost-inv}
    \textbf{($\epsilon$-Almost Robustly Forward Invariance)} Given $\epsilon \in (0,1]$, $\Phi \subseteq \Sigma$, and all assumptions in Problem~\ref{prob:val_dp}. The set $\Phi$ is $\epsilon$-almost robustly forward invariant for the system~\eqref{eq:snapshot-dynmod} if 
    \begin{equation}\label{eq:epsilon-almost-inv}
        \mathbb{P}\Big(\big\{\boldsymbol\sig \in \Phi : g(\boldsymbol\sigma) \not\in \Phi\big\}\Big)\leq \epsilon.
    \end{equation}
\end{definition}
We further consider Algorithm~\ref{alg:val_epsilon} for the $\epsilon$-almost set invariance validation. The intuitive idea is that if a significant amount of initial configurations leading to runs of scenarios that remain inside the given set, the set is then ``probabilistically" forward invariant. In particular, the algorithm executes the Monte Carlo sampling in the sense that the ratio between the total number of points in $\Phi^N_{j}$ and the number of all initial points leads to an estimate of the measure of the probability for the set $\Phi^N_0$ being forward invariant up to the $j$-th step. With sufficiently large $N$, $\Phi^N_0$ is arbitrarily close to $\Phi$ with arbitrarily high probability. This correlation is formally quantified through Theorem~\ref{thm:prob-complete-val} and Lemma~\ref{lma:prob-complete-val}.

\begin{algorithm}[H]
    \begin{algorithmic}[1]
    \State {\bf Input:} Sample size $N$, trajectory horizon $K$, sample space $\Phi$, parameter $\epsilon$.
    \State {\bf Initialize: } $j=1$, done=False, $N$ initial states $\Phi^N_0=\{\boldsymbol\sig_i\}_{i\in\Z_N} \subseteq \Phi$.
    \State {{\bf While} $j < K$ and not done:}
    \State {\ \ \ \ $i=1, \Phi^N_i=\emptyset$}
    \State {\ \ \ \ {\bf While} $i < N$:}
    \State {\ \ \ \ \ \ \ \ $\mathcal{T} = \mathcal{RS}(\Phi^N_{j-1}[i], 2, \pi(\cdot))$}
    \State {\ \ \ \ \ \ \ \ $\boldsymbol\sigma'=\mathcal{T}[2]$}
    \State {\ \ \ \ \ \ \ \ $\Phi^N_j.\text{append}(\boldsymbol\sigma')$}
    \State {\ \ \ \ \ \ \ \ {\bf If} $\boldsymbol\sigma' \notin \Phi$}
    \State {\ \ \ \ \ \ \ \ \ \ \ \ done=True}
    \State {\ \ \ \ \ \ \ \ \ \ \ \ \textbf{break}}
    \State {\ \ \ \ \ \ \ \ {\bf End If}}
    \State {\ \ \ \ \ \ \ \ $j$+=$1$}
    \State {\ \ \ \ $i$+=$1$}
    \State {{\bf Output:} $\text{True}$ if $i=N$ else $\text{False}$}
    \end{algorithmic}
    \caption{$\epsilon$-Almost Safety Validation $\mathcal{VAL}^{\epsilon}(N, \Phi)$} \label{alg:val_epsilon}
\end{algorithm}

\begin{theorem}\label{thm:prob-complete-val}
    Let $E_{\Phi}$ be the event of $\Phi$ being not forward invariant for~\eqref{eq:snapshot-dynmod}. Let the set of $N$ initial states $\Phi^N_0$ be i.i.d. w.r.t. the underlying distribution on $\Phi$. Algorithm~\ref{alg:val_epsilon} is probabilistic complete, i.e.,
    \begin{equation}\label{eq:prob-complete-p}
        \limsup_{\epsilon \rightarrow 0}\mathbb{P} \Bigg( \bigg\{ \mathbb{P}\big(E_{\Phi} \big) \geq \epsilon  \bigg\} \Bigg) \leq (1-\epsilon)^N.
    \end{equation}
\end{theorem}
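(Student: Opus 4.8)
The plan is to reduce the claim to a standard Bernoulli-trials (missing-a-fixed-set) estimate, after first fixing the measure-theoretic meaning of the nested probability. I would fix the dynamics $g$ and the set $\Phi$, and regard $E_\Phi$ as the measurable subset of initial states whose trajectory under~\eqref{eq:snapshot-dynmod} leaves $\Phi$ within the horizon $K$ examined by Algorithm~\ref{alg:val_epsilon}; writing $p := \mathbb{P}(E_\Phi)$ for its measure under the sampling distribution on $\Phi$, the single-step escape set of Definition~\ref{def:epsilon-almost-inv} is contained in $E_\Phi$, so detecting escape over the full horizon is at least as stringent as the one-step test. The decisive observation is that $\mathcal{VAL}^\epsilon(N,\Phi)$ returns $\text{True}$ if and only if none of the $N$ i.i.d. initial states in $\Phi^N_0$ belongs to $E_\Phi$, since the inner loop sets $\text{done}=\text{True}$ exactly when some propagated state exits $\Phi$.

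With this reduction, the core step is immediate. When $p\geq\epsilon$, each of the $N$ independent draws lands outside $E_\Phi$ with probability $1-p\leq 1-\epsilon$, so by independence
\begin{equation}
    \mathbb{P}\big(\text{all } N \text{ samples avoid } E_\Phi\big)=(1-p)^N\leq(1-\epsilon)^N.
\end{equation}
Since ``all samples avoid $E_\Phi$'' is precisely the event that the algorithm outputs $\text{True}$, the probability of a false validation, i.e. declaring $\Phi$ forward invariant while its true failure measure is at least $\epsilon$, is bounded above by $(1-\epsilon)^N$, which is exactly the asserted inequality. The monotone behaviour in $N$ then yields probabilistic completeness: for any fixed $\epsilon>0$ the right-hand side decays to $0$ as the sample budget grows, so a set validated by Algorithm~\ref{alg:val_epsilon} is $\epsilon$-almost robustly forward invariant (Definition~\ref{def:epsilon-almost-inv}) with confidence tending to one.

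The step I expect to be the main obstacle is not the probabilistic estimate, which is routine, but making the statement precise. Since the inner $\mathbb{P}(E_\Phi)$ is a deterministic property of $(g,\Phi)$, the outer probability must be read over the random draw of $\Phi^N_0$ jointly with the algorithm's $\text{True}$ output, and the $\limsup_{\epsilon\to 0}$ should be understood as describing the limiting and, because $(1-\epsilon)^N\to 1$, progressively weaker guarantee rather than a sharp bound; I would therefore state the result as the clean false-validation bound above and comment on the limit separately. A secondary technical point is the reconciliation between the one-step escape set of Definition~\ref{def:epsilon-almost-inv} and the horizon-$K$ escape set actually detected by Algorithm~\ref{alg:val_epsilon}, which I would close either by taking $E_\Phi$ relative to horizon $K$ throughout, or by noting that finite-horizon detection can only enlarge the detected failure set and hence only strengthens the bound.
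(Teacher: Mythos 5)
Your proposal is correct and follows essentially the same route the paper intends: the paper's own "proof" merely defers to the scenario-approach argument of Theorem 2 in the cited reference, and the Bernoulli missing-mass bound $(1-p)^N\leq(1-\epsilon)^N$ for $N$ i.i.d. initial states avoiding the escape set is exactly that argument. You in fact supply more than the paper does, and your closing remarks correctly identify the two genuine imprecisions in the statement as written (the reading of the nested probabilities and the vacuity of the $\limsup_{\epsilon\to 0}$), resolving them in the only sensible way.
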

\begin{lemma}\label{lma:prob-complete-val}
    $\Phi$ is $\epsilon$-almost robustly controlled forward invariant with probability no smaller than $1-\beta$ if
    \begin{equation}\label{eq:prob-complete-N}
        \mathcal{VAL}^{\epsilon}(N, \Phi)=\text{True} \text{ with }N \geq \frac{\ln{\beta}}{\ln{(1-\epsilon)}}.
    \end{equation}
\end{lemma}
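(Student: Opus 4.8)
The plan is to obtain Lemma~\ref{lma:prob-complete-val} as a direct quantitative inversion of Theorem~\ref{thm:prob-complete-val}, reading the latter as an a-posteriori confidence bound and solving for the sample size that drives the failure probability below the prescribed tolerance $\beta$. First I would unpack the target conclusion through Definition~\ref{def:epsilon-almost-inv}: under the reductions of Problem~\ref{prob:val_dp} the controlled dynamics collapse to $g$ in~\eqref{eq:snapshot-dynmod}, so the statement ``$\Phi$ is $\epsilon$-almost robustly controlled forward invariant'' is, by definition, exactly $\mathbb{P}(E_\Phi)\leq\epsilon$, where $\mathbb{P}(E_\Phi)=\mathbb{P}(\{\boldsymbol\sig\in\Phi: g(\boldsymbol\sig)\notin\Phi\})$ is the measure of the one-step escaping initial states. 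Hence the event that $\Phi$ \emph{fails} this property is precisely $\{\mathbb{P}(E_\Phi)>\epsilon\}\subseteq\{\mathbb{P}(E_\Phi)\geq\epsilon\}$, which is exactly the event whose sampling probability Theorem~\ref{thm:prob-complete-val} controls.

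Next I would condition on the hypothesis of the lemma, namely $\mathcal{VAL}^{\epsilon}(N,\Phi)=\text{True}$, and invoke Theorem~\ref{thm:prob-complete-val} to get $\mathbb{P}(\{\mathbb{P}(E_\Phi)\geq\epsilon\})\leq(1-\epsilon)^N$. The probabilistic content behind this is elementary: if the true escape measure were at least $\epsilon$, each of the $N$ i.i.d.\ initial samples would survive a step with probability at most $1-\epsilon$, so the joint event of all $N$ surviving---which is what makes the algorithm report True---has probability at most $(1-\epsilon)^N$. Consequently the complementary event, that $\Phi$ \emph{is} $\epsilon$-almost robustly controlled forward invariant, holds with probability at least $1-(1-\epsilon)^N$.

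It then remains to choose $N$ so that this confidence meets the target $1-\beta$, i.e.\ to enforce $1-(1-\epsilon)^N\geq 1-\beta$, equivalently $(1-\epsilon)^N\leq\beta$. I would finish by taking natural logarithms, giving $N\ln(1-\epsilon)\leq\ln\beta$, and then dividing through by $\ln(1-\epsilon)$. The only point that genuinely requires care---and the step I would flag as the main (indeed sole) obstacle---is the sign: for $\epsilon\in(0,1)$ one has $\ln(1-\epsilon)<0$, so the division reverses the inequality and yields the stated threshold $N\geq\frac{\ln\beta}{\ln(1-\epsilon)}$; since $\ln\beta<0$ for $\beta\in(0,1)$ as well, the right-hand side is a positive number, consistent with $N\in\Z$. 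The degenerate case $\epsilon=1$ sends $\ln(1-\epsilon)\to-\infty$ and the requirement holds trivially for any $N\geq 1$.

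Beyond this bookkeeping I expect no real difficulty, as Theorem~\ref{thm:prob-complete-val} already supplies the probabilistic heavy lifting and the lemma is its monotone rearrangement. The subtlety worth stating cleanly in the write-up is that the guarantee is \emph{conditional} on the algorithm returning True and is taken over the randomness of the i.i.d.\ initial set $\Phi^N_0$, and that ``controlled forward invariant'' coincides with ``forward invariant'' here precisely because Problem~\ref{prob:val_dp} fixes a deterministic policy and zero disturbance.
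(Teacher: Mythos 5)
Your proposal is correct and follows essentially the same route as the paper, which simply states that the lemma is a direct outcome of Theorem~\ref{thm:prob-complete-val}: you invert the bound $(1-\epsilon)^N\leq\beta$ by taking logarithms and correctly flag the inequality reversal from dividing by the negative quantity $\ln(1-\epsilon)$. The additional bookkeeping you supply (unpacking Definition~\ref{def:epsilon-almost-inv}, noting the guarantee is conditional on the True return over the randomness of $\Phi^N_0$, and reconciling the ``controlled'' terminology with the deterministic setting of Problem~\ref{prob:val_dp}) is consistent with, and slightly more explicit than, the paper's own treatment.
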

The main proof of Theorem~\ref{thm:prob-complete-val} adapts from the proof of Theorem 2 in~\cite{wang2020scenario}. The bound $(1-\epsilon)^N$ is also obvious if the underlying set $\Phi$ is indeed $\epsilon$-almost forward invariant by Definition~\ref{def:epsilon-almost-inv}. Lemma~\ref{lma:prob-complete-val} is a direct outcome of Theorem~\ref{thm:prob-complete-val}. As a result, one requires at least $\frac{\ln{\beta}}{\ln{(1-\epsilon)}}$ initial samples in $\Phi$ to get a priori guarantee of the set $\Phi$ being forward invariant for~\eqref{eq:snapshot-dynmod} with probability at least $1-\beta$ for the given $\epsilon$ and $\beta$. For the desired confidence level of $0.99$ and $\epsilon=0.001$, Lemma~\ref{lma:prob-complete-val} indicates 4603 sampled runs of scenarios. 

\begin{remark}
    In theory, the trajectory length $K$ is irrelevant to the algorithm completeness and other statistical properties. In practice, the gathered states starting from the second step are not necessarily i.i.d. This does not affect the presented theoretical properties, as long as $K \geq 2$ and the number of initial sampled points are sufficient and i.i.d. over the initial sample space. The necessity of running scenarios with large $K$ is intuitively appealing and practically efficient. It is of future interest to investigate possible theoretical benefits this may bring to help solve the scenario sampling safety assurance problems. 
\end{remark}

Although the state space properties, such as the cardinality $|\Phi|$, are not posed as direct hyper-parameters in~\eqref{eq:prob-complete-p}, it is still implicitly affecting the sampling complexity. Given the same $\epsilon$, the system with larger $|\Phi|$ also comes with a larger subset that is not forward invariant.

\subsection{Probabilistic complete safety validation for Problem~\ref{prob:val}}

As we move on to solve Problem~\ref{prob:val}, the resolution complete solution becomes infeasible. Simply visiting every state-action pair in $\Phi \times \Gamma$ once is no longer sufficient to guarantee set invariance or safety. The trajectory propagation is no longer deterministic as assumed in solving Problem~\ref{prob:val_dp}. Hence one requires running scenarios repeatedly from the same initial state. On the other hand, the probabilistic complete solution remains valid, but brings practical challenges as we move on to the safety quantification problem later. This section presents an approach that combines the resolution complete solution with the probabilistic complete algorithm. It will also serve as a crucial piece in the safety quantification algorithms presented in the next section. The basic idea is revealed through the following definition.

\begin{definition}\label{def:almost-cpis}
    \textbf{($\epsilon\delta$-Almost Robustly Controlled Forward Invariance)} Given $\epsilon \in (0,1]$, $\delta\in\R_{>0}$, $\Phi \subseteq \Sigma$, and $\Gamma \subseteq \mathcal{U}$. The set $\Phi$ is $\epsilon\delta$-almost robustly controlled forward invariant w.r.t. $\Gamma$ for the system~\eqref{eq:snapshot-dyn} if there exists a $\delta$-covering set $\Phi_{\delta}$ of $\Phi$ with $\Phi_{\sigma}$ such that
    \begin{equation}\label{eq:almost-cpis}
        \mathbb{P}\Big(\big\{\uu \in \Gamma, \boldsymbol\sig \in \Phi_{\sigma} : f(\boldsymbol\sigma,\uu; \boldsymbol\omega) \not\in \Phi_{\delta}\big\}\Big)\leq \epsilon.
    \end{equation}
\end{definition}

The $\epsilon\delta$-almost set invariance is very similar to the $\epsilon$-almost set invariance given by Definition~\ref{def:epsilon-almost-inv}. The main difference is that the state sampling space has been simplified from $\Phi$ to the $\delta$-coverage of $\Phi$. As addressed by Remark~\ref{rmk:delta-helps}, this brings significant practical value for the algorithms introduced later. As a result, Algorithm~\ref{alg:val_delta_epsilon} is also similar to Algorithm~\ref{alg:val_epsilon} with the input sample space changed from $\Phi$ to the $\delta$-covering set of $\Phi$.


Following Theorem~\ref{thm:prob-complete-val}, the probabilistic completeness still applies to Algorithm~\ref{alg:val_delta_epsilon} for actions being i.i.d. w.r.t. the underlying distribution on $\Gamma$. The lower bound for $N$ still applies, but the sample space is different from the state space only considered by Algorithm~\ref{alg:val_epsilon}.

Despite the completeness of the presented algorithm, sampling points (or trajectories) in such a brutal force manner is not necessarily a good idea in practice. The following section describes some provably valid improvements for the presented safety validation algorithms.

\begin{algorithm}[t]
    \begin{algorithmic}[1]
    \State {\bf Input:} Sample size $N$, trajectory horizon $K$, $\Phi_{\delta}, \Phi_{\sig}$, $\Gamma$, parameter $\epsilon$, $\delta$.
    \State {\bf Initialize: } $j=1$, done=False, $N$ initial states $\Phi^N_0=\{\boldsymbol\sig_i\}_{i\in\Z_N} \subseteq \Phi_{\sig}$.
    \State {{\bf While} $j < K$ and not done:}
    \State {\ \ \ \ $i=1, \Phi^N_j=\emptyset$}
    \State {\ \ \ \ {\bf While} $i < N$:}
    \State {\ \ \ \ \ \ \ \ $\mathcal{T} = \mathcal{RS}(\Phi^N_{j-1}[i], 2, U(\Gamma))$}
    \State {\ \ \ \ \ \ \ \ $\boldsymbol\sigma'=\mathcal{T}[2]$}
    \State {\ \ \ \ \ \ \ \ $\Phi^N_j.\text{append}(\boldsymbol\sigma')$}
    \State {\ \ \ \ \ \ \ \ {\bf If} $\boldsymbol\sigma' \notin \Phi_{\delta}$}
    \State {\ \ \ \ \ \ \ \ \ \ \ \ done=True}
    \State {\ \ \ \ \ \ \ \ \ \ \ \ \textbf{break}}
    \State {\ \ \ \ \ \ \ \ {\bf End If}}
    \State {\ \ \ \ \ \ \ \ $j$+=$1$}
    \State {\ \ \ \ $i$+=$1$}
    \State {{\bf Output:} $\text{True}$ if $i=N$ else $\text{False}$}
    \end{algorithmic}
    \caption{$\epsilon\delta$-Almost Safety Validation $\mathcal{VAL}^{\epsilon\delta}(N,\Phi_{\delta}, \Phi_{\sig}, \Gamma)$} \label{alg:val_delta_epsilon}
\end{algorithm}


\begin{figure}[!t]
\centering
\begin{subfigure}{\linewidth}
  \centering
  \includegraphics[width=0.85\textwidth]{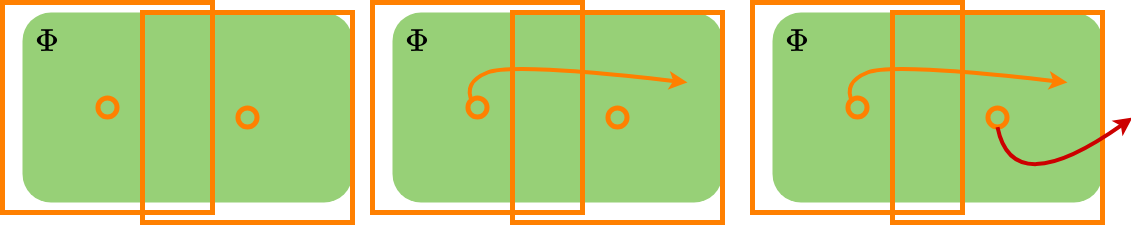}
  \caption{A conceptual illustration of Algorithm~\ref{alg:val_delta} with resolution completeness: at least one trajectory leaving $\Phi$ disqualifies the invariance property of $\Phi$.}
  \label{fig:val_delta}
\end{subfigure}
\begin{subfigure}{\linewidth}
  \centering
  \includegraphics[width=0.85\textwidth]{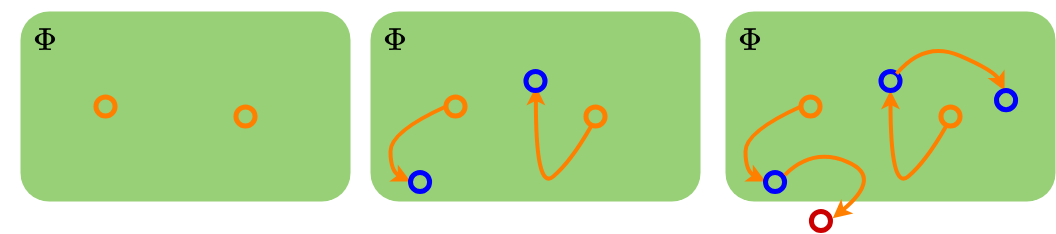}
  \caption{A conceptual illustration of Algorithm~\ref{alg:val_epsilon} with probabilistic completeness: a sufficient amount of sampled trajectories remain inside $\Phi$ for at least $K , K\geq2$ steps validates the forward invariance property.}
  \label{fig:val_epsilon}
\end{subfigure}
\begin{subfigure}{\linewidth}
  \centering
  \includegraphics[width=0.85\textwidth]{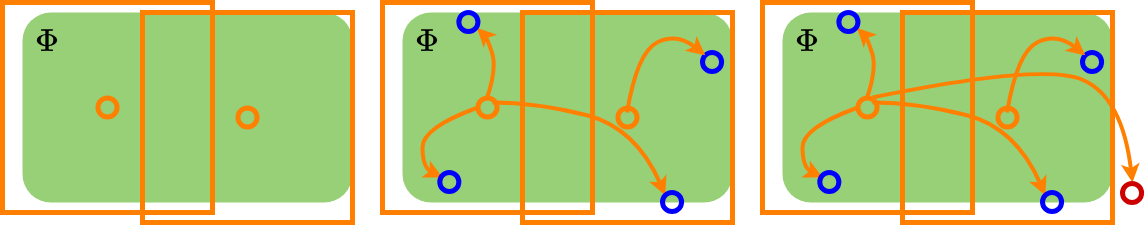}
  \caption{A conceptual illustration of Algorithm~\ref{alg:val_delta_epsilon}: note that the sample size is larger than the number of $\delta$-neighbourhoods given one of the point is repeatedly sampled for trajectory propagation.}
  \label{fig:val_epsilon_delta}
\end{subfigure}
\caption{Conceptual illustrations of three algorithms presented for the safety validation problem.}
\label{fig:val}
\end{figure}

\subsection{Improved safety validation}
The core idea for the presented improvements is that not every point in $\Phi \times \Gamma$ must be tested to prove forward invariance. This is illustrated in the state space $\Phi$ and the action space $\Gamma$ respectively as follows.

\subsubsection{The ``boundary" state set}
The first improvement on sampling efficiency is to show that the investigation of a subset of $\Phi$ is sufficient to solve Problem~\ref{prob:val}. Consider the following assumption.
\begin{assumption}\label{asp:bound_sig}
    Consider the dynamics~\eqref{eq:snapshot-dyn}, assume $\exists \bar{\sigma} > 0$ such that $\norm{\boldsymbol\sig(t+1)-\boldsymbol\sig(t)} \leq \bar{\sig}, \forall t \in \Z$.
\end{assumption}
Note that the above property can be a direct outcome of $f$ in~\eqref{eq:snapshot-dyn} being Lipschitz continuous in theory. It is also practically valid in the applications of interest to this paper.

\begin{theorem}\label{thm:boundary_val}
    Given $\epsilon \in (0, 1], \delta \in R_{>0}$ and Assumption~\ref{asp:bound_sig}. Let the compact sets $\Phi \subseteq \Sigma$ and $\Gamma \subseteq \mathcal{U}$. 
    Let $\partial \Phi$ be the boundary of $\Phi$. Define the $\bar{\sig}$-boundary set of $\Phi$ as
    \begin{equation}
        \Phi^{\bar{\sig}} = \{ \boldsymbol\sig \in \Phi \mid 0 \geq d_s(\boldsymbol\sig, \partial \Phi) \geq -\bar{\sig} \}.
    \end{equation}
    $\Phi$ is $\epsilon\delta$-almost robustly controlled forward invariant w.r.t. $\Gamma$ if and only if there exists a $\delta$-covering set of $\Phi^{\bar{\sig}}$ denoted as $\Phi^{\bar{\sig}}_{\delta}$ along with $\Phi^{\bar{\sig}}_{\sig}$ defined by Definition~\ref{def:delta-coverage}, such that
    \begin{equation}
        \mathbb{P}\Big(\big\{\uu \in \Gamma, \boldsymbol\sig \in \Phi^{\bar{\sig}}_{\sig} : f(\boldsymbol\sigma,\uu; \boldsymbol\omega) \not\in \Phi^{\bar{\sig}}_{\delta}\big\}\Big)\leq \epsilon.
    \end{equation}
\end{theorem}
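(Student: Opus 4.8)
The plan is to reduce the entire equivalence to a single geometric ``one-step containment'' fact and then push that fact through both implications. The driving observation is that Assumption~\ref{asp:bound_sig} caps the per-step displacement at $\bar\sigma$, so a state lying deeper than $\bar\sigma$ inside $\Phi$ cannot leave $\Phi$ in one transition, no matter which action $\uu\in\Gamma$ or disturbance $\boldsymbol\omega\in\W$ is applied. Hence the only initial states that can ever realize the escape event of Definition~\ref{def:almost-cpis} are those in the boundary layer $\Phi^{\bar\sigma}$, which is precisely the set the theorem proposes to sample. Everything else should follow by bookkeeping once this is made precise.

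First I would isolate and prove the key lemma: for every $\boldsymbol\sigma\in\Phi$ with $d(\boldsymbol\sigma,\partial\Phi)>\bar\sigma$ (equivalently $\boldsymbol\sigma\in\Phi\setminus\Phi^{\bar\sigma}$, since the sign convention of $d_s$ makes $\Phi^{\bar\sigma}$ exactly the interior points within $\bar\sigma$ of $\partial\Phi$), one has $f(\boldsymbol\sigma,\uu;\boldsymbol\omega)\in\Phi$ for all admissible $\uu,\boldsymbol\omega$. I would argue by contradiction: setting $\boldsymbol\sigma'=f(\boldsymbol\sigma,\uu;\boldsymbol\omega)$, if $\boldsymbol\sigma'\notin\Phi$ then the straight segment from $\boldsymbol\sigma\in\Phi$ to $\boldsymbol\sigma'\notin\Phi$ is connected with endpoints on opposite sides of the closed set $\Phi$, so it meets $\partial\Phi$ at some $\p$; then $d(\boldsymbol\sigma,\partial\Phi)\le\norm{\boldsymbol\sigma-\p}\le\norm{\boldsymbol\sigma-\boldsymbol\sigma'}\le\bar\sigma$ by Assumption~\ref{asp:bound_sig}, contradicting $d(\boldsymbol\sigma,\partial\Phi)>\bar\sigma$. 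Since $\Phi\subseteq\Phi_\delta$ for any $\delta$-covering, these interior states never trigger the event $f\notin\Phi_\delta$ either.

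With the lemma in hand I would dispatch the two directions. For ``if,'' I would start from a $\delta$-covering $\Phi^{\bar\sigma}_\delta$ of the boundary layer whose restricted escape probability is $\le\epsilon$, adjoin to $\Phi^{\bar\sigma}_\sigma$ whatever extra centres are needed to $\delta$-cover the interior $\Phi\setminus\Phi^{\bar\sigma}$, and call the union $\Phi_\sigma$ with covering $\Phi_\delta$; by the lemma the added interior centres contribute no new escapes, so the full escape event coincides with the boundary one and its probability is still $\le\epsilon$, giving $\epsilon\delta$-almost robust controlled forward invariance via Definition~\ref{def:almost-cpis}. For ``only if,'' I would reverse this: take the full $\delta$-covering guaranteed by $\epsilon\delta$-almost invariance, retain only the centres lying in $\Phi^{\bar\sigma}$ to form $\Phi^{\bar\sigma}_\sigma$ and $\Phi^{\bar\sigma}_\delta$, and invoke the lemma to conclude that discarding the interior centres removes only non-escaping samples, leaving the escape event—and hence its probability—unchanged.

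The step I expect to be the main obstacle is the measure bookkeeping that turns ``the escape event is unchanged'' into an honest statement about $\mathbb{P}(\cdot)$: one must fix a single sampling measure for $(\uu,\boldsymbol\sigma,\boldsymbol\omega)$ and check that moving between $\Phi_\sigma$ and $\Phi^{\bar\sigma}_\sigma$ neither renormalizes nor reweights the escaping samples, which is exactly where the lemma's ``interior states never escape'' property is needed to \emph{identify} the two escape sets rather than merely bound one by the other. A related subtlety is the target of the membership test: a boundary state may legitimately move inward, leaving the thin shell $\Phi^{\bar\sigma}$ while staying safely inside $\Phi$, so I would take care that what is certified is containment in $\Phi$ (as approximated by the $\delta$-covering per Remark~\ref{rmk:delta-helps}) and not containment in the shell alone; the lemma again guarantees that such inward transitions are harmless and cannot inflate the escape probability.
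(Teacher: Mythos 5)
Your proposal is correct and structurally matches the paper's proof (one direction from $\Phi^{\bar{\sig}}\subseteq\Phi$, the other by contradiction), but it supplies essentially all of the content the paper omits: the paper's entire argument is two sentences, and the reverse direction is simply declared ``immediate through contradiction.'' Your one-step containment lemma --- that a state with $d(\boldsymbol\sig,\partial\Phi)>\bar{\sig}$ cannot exit $\Phi$ in a single transition, proved by intersecting the segment from $\boldsymbol\sig$ to $f(\boldsymbol\sig,\uu;\boldsymbol\omega)$ with $\partial\Phi$ and invoking Assumption~\ref{asp:bound_sig} --- is exactly the fact that makes that contradiction work and is the only place the bounded-step assumption enters; the paper never states it. The two subtleties you flag at the end are also genuine, and the paper's proof is silent on both: (i) passing from $\Phi_{\sig}$ to $\Phi^{\bar{\sig}}_{\sig}$ renormalizes the sampling measure, so under a uniform distribution on centres the conditional escape probability over the shell can exceed $\epsilon$ even when the unconditional one does not, which means the ``only if'' direction with the \emph{same} $\epsilon$ does not hold literally unless the measure is fixed rather than renormalized (or $\epsilon$ is rescaled by the shell's relative mass); and (ii) the theorem's displayed condition tests membership in $\Phi^{\bar{\sig}}_{\delta}$, the covering of the thin shell, so a boundary state moving harmlessly inward would be counted as an escape --- the test should be against a $\delta$-covering of $\Phi$ itself, as you note via Remark~\ref{rmk:delta-helps}. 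Your proof is the right repair of the statement; to close it fully you should say explicitly that you read the condition with the membership test against $\Phi_{\delta}$ and with one fixed product measure on $(\boldsymbol\sig,\uu,\boldsymbol\omega)$ shared by both sides of the equivalence.
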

\begin{proof}
    The sufficient condition is proved through Definition~\ref{def:almost-cpis}. If $\Phi$ is robustly controlled forward invariant for~\eqref{eq:snapshot-dyn}, any trajectory initialized from any subset of $\Phi$ shall still satisfy the invariance condition. The $\bar{\sig}$-boundary set of $\Phi$ is a subset of $\Phi$. Hence the sufficient condition is valid. On the other hand, the necessary condition is immediate through contradiction. 
\end{proof}
Theorem~\ref{thm:boundary_val} has reduced the state sample space to a subset near the boundary of $\Phi$ for sufficient validation. This also aligns with the intuition and engineering practice that one does not consider traffic participants that are sufficiently far away from the SV when executing the run of a testing scenario. Note that the boundary of $\Phi$ is not necessarily equivalent to collisions. Hence the boundary case is not the same as the so-called ``corner" cases in the concrete scenario design.

\subsubsection{The ``adversarial" action set}
Following a similar idea in pursuing a small subset for safety validation, we now consider the admissible action space $\Gamma$. Define the ``adversarial action set" as follows.
\begin{definition} \label{def:adv-aset}
    \textbf{(Adversarial action set)} Given $\boldsymbol\sig \in \Phi$ and the admissible action space $\Gamma$. The adversarial action set $\Gamma^* \subseteq \Gamma$ is defined as 
    \begin{equation}
        \Gamma^*:=\Big\{\uu\!\in\!\Gamma\!\mid\!\uu=\underset{\uu\in\Gamma}{\mathrm{ argmax }}\ \underset{\boldsymbol\omega\!\in\!\Omega}{\inf} \ d_s(f(\boldsymbol\sig,\!\uu;\!\boldsymbol\omega),\!\partial \Phi)\Big\}.
    \end{equation}
\end{definition}
That is, an action is deemed adversarial if it ``pushes" the current state as close as possible to the exterior of $\Phi$. Obviously, such an adversarial action is not necessarily unique in general. However, for some case-specific analyses, as we will show in the next section, the choices of adversarial actions can be very limited.

\begin{theorem}\label{thm:adv-aset}
    Consider the compact sets $\Phi \subseteq \Sigma$ and $\Gamma \subseteq \mathcal{U}$. Let $\Gamma^* \subseteq \Gamma$ be the adversarial action set by Definition~\ref{def:adv-aset}. $\Phi$ is robustly controlled forward invariant for~\eqref{eq:snapshot-dyn} w.r.t. $\Gamma$ if and only if $\Phi$ is robustly controlled forward invariant for~\eqref{eq:snapshot-dyn} w.r.t. $\Gamma^*$.
\end{theorem}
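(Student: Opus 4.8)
The plan is to reduce robust controlled forward invariance to a one-step inclusion condition and then exploit the extremal (argmax) characterization of $\Gamma^*$ to dominate an arbitrary action by an adversarial one. Throughout I read ``$\Phi$ is robustly controlled forward invariant with respect to an action set $A$'' as: for every $\boldsymbol\sig(0)\in\Phi$, every disturbance sequence, and every $\uu(t)\in A$, the trajectory of~\eqref{eq:snapshot-dyn} stays in $\Phi$. By a one-step induction this is equivalent to the one-step condition that $f(\boldsymbol\sig,\uu;\boldsymbol\omega)\in\Phi$ for all $\boldsymbol\sig\in\Phi$, $\uu\in A$, and $\boldsymbol\omega\in\W$: the forward implication is the case $t=1$, and the converse follows because $\boldsymbol\sig(t)\in\Phi$ together with the one-step condition forces $\boldsymbol\sig(t+1)\in\Phi$. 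I will also use that, for the compact (hence closed) set $\Phi$, we have $\boldsymbol\sig\in\Phi$ iff $d_s(\boldsymbol\sig,\partial\Phi)\le 0$, so that a violation corresponds exactly to a next state with strictly positive signed distance. Before the two directions I would record that $\Gamma^*$ is nonempty and the argmax is attained: since $\Gamma$ is compact, $f$ is continuous, and $d_s(\cdot,\partial\Phi)$ is continuous, the map $\uu\mapsto \inf_{\boldsymbol\omega\in\W} d_s(f(\boldsymbol\sig,\uu;\boldsymbol\omega),\partial\Phi)$ is an infimum of continuous functions, hence upper semicontinuous on a compact set, and attains its maximum by the extreme value theorem.

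The forward direction is immediate from $\Gamma^*\subseteq\Gamma$: if the one-step inclusion holds for every $\uu\in\Gamma$ then it holds a fortiori for every $\uu\in\Gamma^*$, so invariance with respect to $\Gamma$ implies invariance with respect to $\Gamma^*$. For the converse I would argue by contraposition. Suppose $\Phi$ is not robustly controlled forward invariant with respect to $\Gamma$. By the one-step reduction there exist a state $\boldsymbol\sig\in\Phi$, an action $\uu\in\Gamma$, and a disturbance $\boldsymbol\omega$ with $d_s(f(\boldsymbol\sig,\uu;\boldsymbol\omega),\partial\Phi)>0$. Let $\uu^*\in\Gamma^*$ be an adversarial action at this same state $\boldsymbol\sig$. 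By Definition~\ref{def:adv-aset}, $\uu^*$ maximizes the worst-case signed distance over all actions in $\Gamma$, so the next state reached under $\uu^*$ is pushed at least as far toward the exterior of $\Phi$ as the next state reached under $\uu$; consequently there is a disturbance $\boldsymbol\omega^*$ with $d_s(f(\boldsymbol\sig,\uu^*;\boldsymbol\omega^*),\partial\Phi)>0$. This exhibits a violation at $(\boldsymbol\sig,\uu^*,\boldsymbol\omega^*)$ with $\uu^*\in\Gamma^*$, so $\Phi$ is not robustly controlled forward invariant with respect to $\Gamma^*$, completing the contrapositive.

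The hard part will be making the comparison ``pushed at least as far toward the exterior'' rigorous, since Definition~\ref{def:adv-aset} measures adversariality through $\inf_{\boldsymbol\omega} d_s$, whereas a single violating pair $(\uu,\boldsymbol\omega)$ only controls $\sup_{\boldsymbol\omega} d_s$. I would therefore isolate and prove the key domination inequality
\begin{equation*}
    \sup_{\boldsymbol\omega\in\W} d_s\big(f(\boldsymbol\sig,\uu;\boldsymbol\omega),\partial\Phi\big)\;\le\;\sup_{\boldsymbol\omega\in\W} d_s\big(f(\boldsymbol\sig,\uu^*;\boldsymbol\omega),\partial\Phi\big)\quad\text{for all }\uu\in\Gamma,
\end{equation*}
which is precisely the statement that the adversarial action realizes the worst-case one-step behavior; the violation under $\uu$ makes the left side positive, hence the right side is positive and supplies the desired $\boldsymbol\omega^*$. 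Establishing this inequality is where the interplay between the $\max$ over actions and the order of $\inf$/$\sup$ over disturbances must be handled with care, and it is the only nontrivial ingredient of the proof.
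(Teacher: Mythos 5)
Your overall architecture matches the paper's: the forward direction via $\Gamma^*\subseteq\Gamma$ is identical, and your converse by contraposition using the extremality of the adversarial action is the same idea the paper runs by contradiction. The one-step reduction and your observation that the argmax in Definition~\ref{def:adv-aset} is attained (upper semicontinuity of $\uu\mapsto\inf_{\boldsymbol\omega}d_s(f(\boldsymbol\sig,\uu;\boldsymbol\omega),\partial\Phi)$ on the compact $\Gamma$) are correct and are actually more careful than the paper.

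The gap is the ``key domination inequality'' you isolate and then defer: it is never proved, and it does not follow from Definition~\ref{def:adv-aset} as written. That definition ranks actions by $\inf_{\boldsymbol\omega\in\W} d_s(f(\boldsymbol\sig,\uu;\boldsymbol\omega),\partial\Phi)$, whereas a one-step violation under some pair $(\uu,\boldsymbol\omega)$ only tells you that $\sup_{\boldsymbol\omega\in\W} d_s(f(\boldsymbol\sig,\uu;\boldsymbol\omega),\partial\Phi)>0$. Maximizing the infimum over disturbances does not maximize the supremum: take an action $\uu$ whose one-step signed distances range over $[-1,1]$ as $\boldsymbol\omega$ varies (so the infimum is $-1$ and a violating disturbance exists) and an action $\uu^*$ whose signed distances range over $[-1/2,-1/10]$ (so its infimum $-1/2$ exceeds $-1$, making $\uu^*$ the argmax, yet its supremum is negative and no disturbance ever pushes the state out under $\uu^*$). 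In such a configuration $\Phi$ is invariant w.r.t.\ $\Gamma^*$ but not w.r.t.\ $\Gamma$, so the necessity direction genuinely fails under the literal definition; no reordering of $\inf$ and $\sup$ rescues your inequality. To be fair, the paper's own proof elides the same point: it asserts that $d_s(f(\boldsymbol\sig,\uu;\boldsymbol\omega),\partial\Phi)\ge d_s(f(\boldsymbol\sig,\uu^*;\boldsymbol\omega),\partial\Phi)$ at a common $\boldsymbol\omega$ ``contradicts'' the argmax-of-inf definition, which it does not. The repair is to read (or restate) Definition~\ref{def:adv-aset} with $\sup_{\boldsymbol\omega}$ in place of $\inf_{\boldsymbol\omega}$ --- the natural worst case for safety over disturbances --- or to require the comparison for every fixed $\boldsymbol\omega$; with either reading your domination inequality becomes immediate from the argmax property and your contrapositive closes exactly as sketched.
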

\begin{proof}
    First, if $\Phi$ is robustly controlled forward invariant for~\eqref{eq:snapshot-dyn} w.r.t. $\Gamma$, by Definition~\ref{def:robust-cfi}, it is immediate that $\Phi$ is robustly controlled forward invariant for~\eqref{eq:snapshot-dyn} w.r.t. any subset of $\Gamma$. Given $\Gamma^* \subseteq \Gamma$, this proves the sufficient condition. On the other hand, suppose $\Phi$ is robustly controlled forward invariant for~\eqref{eq:snapshot-dyn} w.r.t. $\Gamma^*$, but $\Phi$ is not robustly controlled forward invariant for~\eqref{eq:snapshot-dyn} w.r.t. $\Gamma$. This implies the existence of $\uu\in\Gamma \setminus \Gamma^*$ and $\uu^* \in \Gamma^*$ such that $d_s(f(\boldsymbol\sig, \uu; \boldsymbol\omega), \partial \Phi) \geq d_s(f(\boldsymbol\sig, \uu^*; \boldsymbol\omega)$ for some $\boldsymbol\sig \in \Phi$. This contradicts Definition~\ref{def:adv-aset}. Hence the necessary condition also holds.
\end{proof}
To summarize, instead of taking $\Phi \times \Gamma$ for scenario sampling safety validation, it is provably sufficient to consider sampling within $\Phi^{\bar{\sig}}_{\sig} \times \Gamma^*$. This improves the sampling efficiency of all proposed algorithms in this section. 

\section{The Safety Quantification Problem} \label{sec:quantification}
Ultimately, the safety validation algorithm answers a ``yes-or-no" question. The presented results in the above section further enhance the answer with analytical guarantees. However, simply falsifying or certifying a given ODD is not sufficient for the general safety assurance task. This inspires the safety quantification problems studied in this section.

\subsection{The quantification problem formulation}
Overall, a scenario sampling safety quantification algorithm seeks to find a subset of the given sample space $\Sigma$ that is deemed safe. In practice, such a safe sub-domain, or an almost robustly controlled forward invariant subset of $\Sigma$, is not necessarily unique. This leads to two types of safety quantification problems. 
The first type of the scenario sampling quantification problem is to design an algorithm that is at least worth the sampling efforts. That is, as one samples more runs of scenarios, the algorithm is expected to have a higher probability of finding at least one safe sub-domain. This is formally presented as the \emph{feasible safety quantification problem} as follows.
\begin{problem}\label{prob:feasible-qnt}
    \textbf{(Feasible Scenario Sampling Safety Quantification Problem)} Given $\Sigma$ and $\mathcal{U}$, let $O(\Phi, \Gamma)$ denote the event of $\Phi \subseteq \Sigma$ being robustly controlled forward invariant w.r.t. $\Gamma \subseteq \mathcal{U}$ for~\eqref{eq:snapshot-dyn}. The feasible scenario sampling safety refinement problem seeks to find an algorithm
    \begin{equation}
        \mathcal{QNT}(N, \Sigma, \mathcal{U}), \mathcal{QNT}: \Z \times \Sigma \times \mathcal{U} \rightarrow \Sigma \times \mathcal{U},
    \end{equation}
    that is probabilistic complete, i.e., 
    \begin{equation}
        \limsup_{N\rightarrow \infty}\mathbb{P} \Bigg( \bigg\{ O\Big(\mathcal{QNT}\big(N, \Sigma, \mathcal{U}\big)\Big) \bigg\}  \Bigg) = 1.
    \end{equation}
\end{problem}

The second type of safety quantification problem focuses on when a particular desired property of the algorithm output is specified. For example, one may seek to obtain (i) the union of all safe sub-domains or (ii) the sub-domain with the cardinality close to a certain desired value. Such a desired property is often presented as a cost function, and the algorithm shall seek to reach the optimal cost at least asymptotically. This leads to the following problem.
\begin{problem}\label{prob:optimal-qnt}
    \textbf{(Optimal Scenario Sampling Safety Quantification Problem)} Given $\Sigma$, $\mathcal{U}$, and a cost function $c: \Sigma \times \mathcal{U} \rightarrow \R$ with the optimal cost $c^*$. Assume the optimal cost $c^*$ is feasible, i.e., $\exists \Phi^* \subseteq \Sigma, \exists \Gamma^* \subseteq \mathcal{U}$, $c(\Phi^*, \Gamma^*)=c^*$. The optimal scenario sampling safety refinement problem seeks to find an algorithm
    \begin{equation}
        \mathcal{QNT}(N, \Sigma, \mathcal{U}), \mathcal{QNT}: \Z \times \Sigma \times \mathcal{U} \rightarrow \Sigma \times \mathcal{U},
    \end{equation}
    that is asymptotically optimal, i.e.,
    \begin{enumerate}
        \item $\mathcal{QNT}(N, \Sigma, \mathcal{U})$ is probabilistic complete, and
        \item $\mathbb{P}\left(\bigg\{\underset{N\rightarrow \infty}{\limsup} \Big(c\big(\mathcal{QNT}(N, \Sigma, \mathcal{U})\big)=c^*\Big) \bigg\}\right)=1$.
    \end{enumerate}
\end{problem}

The presented optimal quantification problem is obviously more challenging than Problem~\ref{prob:feasible-qnt} given the probabilistic completeness is only part of the desired algorithm property. In practice, not every ``seemingly" working algorithm provably satisfies the desired properties specified by Problem~\ref{prob:feasible-qnt} or Problem~\ref{prob:optimal-qnt}. This will be illustrated in detail later, where a variety of scenario sampling algorithms of different safety assurance properties are presented. 

\begin{remark}\label{rmk:unique}
    Note that neither the probabilistic complete solution for Problem~\ref{prob:feasible-qnt} nor the asymptotically optimal solution for Problem~\ref{prob:optimal-qnt} is unique. With the same statistical guarantee, algorithms can still have different sampling efficiency, time complexity, and parallel processing compatibility. For readers that are familiar with the sampling-based motion planning context~\cite{karaman2011sampling}, the same property is observed. For example, $k$-nearest sPRM for $k=1$ is incomplete, RRT is probabilistic complete but sub-optimal, and RRT* is asymptotically optimal.
\end{remark}

The remainder of this section seeks to provide some algorithm examples that either (i) provably solves some of the above problems or (ii) provably fails to solve the problem. We believe the theoretical exploration towards both directions helps to enhance the understanding of the scenario sampling safety quantification problem from the set invariance perspective. Moreover, as addressed by Remark~\ref{rmk:unique}, it is beyond the scope of this paper to present ``the best" algorithm for the problems. It is of future interest to explore more efficient solutions satisfying the desired properties.

\subsection{Towards feasible safety quantification}
We start with a somewhat intuitive solution. Given Algorithm~\ref{alg:val_delta_epsilon} for provably complete safety validation, Algorithm~\ref{alg:qnt_vs} executes a ``Vanilla Sampling" (VS) scheme by repeatedly checking a sampled subset of $\Phi\times\Gamma$ until one is validated safe.
\begin{algorithm}[H]
    \begin{algorithmic}[1]
    \State {\bf Input:} Sample size $N \in \Z$, sample space $\Sigma \times \mathcal{U}$
    \State {{\bf For} $i$ in $\{1,\ldots, N\}$ {\bf do}:}
    \State {\ \ \ \ Sample $\Phi \subseteq \Sigma, \Gamma \subseteq \mathcal{U}$}
    \State {\ \ \ \ {\bf If} $\mathcal{VAL}^{\epsilon\delta}(N, \Phi, \Gamma)$}
    \State {\ \ \ \ \ \ \ \ {\bf break}}
    \State {\ \ \ \ {\bf End If}}
    \State {{\bf End For}}
    \State {{\bf Output:} $\Phi, \Gamma$}
    \end{algorithmic}
    \caption{$\mathcal{QNT}^{\text{VS}}(N, \Sigma, \mathcal{U})$} \label{alg:qnt_vs}
\end{algorithm}
However, a rigorous analysis of Algorithm~\ref{alg:qnt_vs} indicates a not so demanded property.
\begin{theorem}\label{thm:vs-incomplete}
    Algorithm~\ref{alg:qnt_vs} is probabilistic incomplete.
\end{theorem}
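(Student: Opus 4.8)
The plan is to establish incompleteness by exhibiting a single problem instance on which the completeness limit of Problem~\ref{prob:feasible-qnt} fails, i.e. for which
\[
\limsup_{N\to\infty}\mathbb{P}\Big(\big\{O\big(\mathcal{QNT}^{\text{VS}}(N,\Sigma,\mathcal{U})\big)\big\}\Big)<1 .
\]
The structural reason VS is vulnerable is that it draws the candidate pairs $(\Phi,\Gamma)$ independently and never uses a rejected candidate to bias later draws; consequently its asymptotic behaviour is governed entirely by the per-draw success probability $p:=\mathbb{P}\big(O(\Phi,\Gamma)\big)$ of sampling an \emph{actually} robustly controlled forward invariant pair in the sense of Definition~\ref{def:robust-cfi}. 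The whole argument therefore reduces to finding an instance for which $p=0$.

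First I would make the reduction precise. The output of Algorithm~\ref{alg:qnt_vs} is invariant only if at least one of the i.i.d. candidates drawn during the run is \emph{actually} invariant; hence a union bound gives $\mathbb{P}(\text{output invariant})\le Np$ for each finite $N$, and countable subadditivity over the whole sequence of draws forces this probability to $0$ whenever $p=0$, so the limit above is $0\neq 1$. It is worth flagging a second, independent failure mode that only reinforces this: the internal test $\mathcal{VAL}^{\epsilon\delta}$ certifies merely the $\epsilon\delta$-almost invariance of Definition~\ref{def:almost-cpis}, so even a candidate that breaks the loop need not satisfy the exact invariance demanded by $O(\cdot)$; the cleanest route, however, keeps $p=0$ as the single driver.

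Second I would construct the instance so that the collection of robustly controlled forward invariant subsets is non-generic. A convenient choice is an ``expanding'' dynamics on a compact $\Sigma$ whose only nonempty robustly controlled forward invariant set is a lower-dimensional object (for example a single equilibrium): every full-dimensional candidate $\Phi$ then contains boundary states that are pushed strictly outside $\Phi$ under the adversarial action of Definition~\ref{def:adv-aset} (cf. Theorem~\ref{thm:adv-aset}), so $\Phi$ is not invariant, while no candidate extending past the equilibrium can be invariant either, since it would contain states that inevitably leave. Under any atomless (continuous) sampling law over the candidate family this lower-dimensional invariant target is hit with probability zero, giving $p=0$ and hence the claim.

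The hard part will be the bookkeeping around what the algorithm deliberately leaves unspecified, namely the distribution from which $\Phi$ and $\Gamma$ are drawn. One cannot claim $p=0$ universally, because in benign instances the invariant sets can form a positive-measure family; since incompleteness is an existence statement, the obstacle is to pin down a concrete, admissible instance-plus-sampling-law pair and then verify the two supporting claims: (i) no proper full-dimensional subset of the maximal invariant set is itself invariant, which is the genuinely technical step and hinges on the ``expanding toward the boundary'' property of the chosen dynamics; and (ii) the validation subroutine cannot accidentally restore completeness, which holds because success of $O(\cdot)$ requires actual invariance of the output and that event still carries probability zero.
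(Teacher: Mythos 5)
Your proposal is correct and rests on the same core observation as the paper's own (one-sentence) argument: because Algorithm~\ref{alg:qnt_vs} draws its candidate pairs $(\Phi,\Gamma)$ i.i.d.\ from a fixed law and never adapts, the relevant success probability is a constant in $N$, so the completeness limit required by Problem~\ref{prob:feasible-qnt} cannot be forced to one. You go slightly further than the paper's sketch by pinning that constant to zero via an explicit degenerate instance (a measure-zero family of invariant sets under an atomless sampling law, handled with a union bound), which is a legitimate and arguably more rigorous completion of the same idea.
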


The main idea to prove the Theorem~\ref{thm:vs-incomplete} is to show that with the consistent sample space $\Phi \times \Gamma$, the probability of getting an $\epsilon\delta$-almost robustly controlled forward invariant set is a constant $c, c \in [0,1]$. Hence it does not change with respect to the number of samples $N$. 

For the algorithm variants discussed in the remainder of this section, we also adopt Assumption~\ref{asp:fix-gamma}.
\begin{assumption}\label{asp:fix-gamma}
    Assume the admissible action space $\Gamma$ is given and remains time-invariant.
\end{assumption}
As a result, the quantification algorithm is not allowed to modify $\Gamma$ in searching for the invariant set. In practice, fixing $\Gamma$ indicates a fixed control capability of other traffic participants and scenario factors, which further induces a fixed level of traffic aggressiveness. This assumption simplifies the analysis, but the proposed algorithms are still compatible with the general case where $\Gamma$ remains unknown. 

We are now ready to propose another alternative in Algorithm~\ref{alg:qnt_te} with ``$\delta$-pruning" where the disqualified samples with the corresponding $\delta$-neighborhoods are instantaneously removed from the sample space in an ``trial-and-error" manner.
\begin{algorithm}[H]
    \begin{algorithmic}[1]
    \State {\bf Input:} Sample size $N$, sample space $\Sigma$, admissible action space $\Gamma$, parameters $\epsilon, \delta$, trajecoty horizon $K$.
    \State {\bf Initialize: } $i=1$, $\Phi_{\delta}, \Phi_{\sig}$ obtained from the $\delta$-coverage of $\Phi=\Sigma$, $\bar{\Phi}=\emptyset$
    \State {{\bf While} $i<N$:}
    \State {\ \ \ \ Sample $\boldsymbol\sig \in \Phi_{\sig} \setminus \bar{\Phi}$}
    \State {\ \ \ \ $\mathcal{T} = \mathcal{RS}(\boldsymbol\sig, K, U(\Gamma))$}
    \State {\ \ \ \ {\bf For} $\boldsymbol\sig'$ in $\mathcal{T}[2:]$ {\bf do}}
    \State {\ \ \ \ \ \ \ \ {\bf If} $\boldsymbol\sig' \notin \Phi_{\delta}$}
    \State {\ \ \ \ \ \ \ \ \ \ \ \ $\bar{\Phi}\text{.append}(\boldsymbol\sigma)$}
    \State {\ \ \ \ \ \ \ \ \ \ \ \ {\bf break}}
    \State {\ \ \ \ \ \ \ \ {\bf End If}}
    \State {\ \ \ \ {\bf End For}}
    \State {\ \ \ \ $i$+=$1$}
    \State {{\bf Output:} $\bigcup_{\boldsymbol\sig\in\Phi_{\sig} \setminus \bar{\Phi}}\ \mathcal{N}_{\delta}(\boldsymbol\sig)$, $\Gamma$}
    \end{algorithmic}
    \caption{$\delta$-Pruning Safety Quantification $\mathcal{QNT}^{\delta\text{P}}(N, \Sigma, \Gamma)$} \label{alg:qnt_te}
\end{algorithm}
Figure~\ref{fig:qnt_te} conceptually illustrates the working process of Algorithm~\ref{alg:qnt_te}. Intuitively, the idea of falsifying the close neighborhood of a disqualified trajectory is appealing and may still work in some cases in practice. However, a more detailed analysis reveals a potential flaw of the algorithm.

\begin{figure}
    \vspace{3mm}
    \centering
    \includegraphics[width=0.6\textwidth]{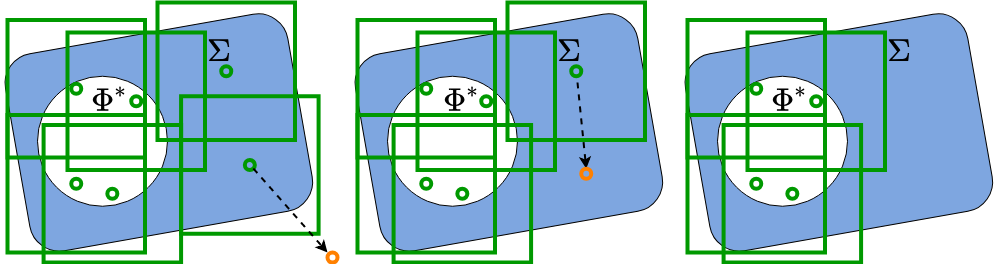}
    \caption{A conceptual illustration of Algorithm~\ref{alg:qnt_te}: the $\delta$-neighborhood $\mathcal{N}_{\delta}(\boldsymbol\sig)$ will be removed from the instantaneous sample space $\Phi_{\sig}$ whenever a trajectory starting from $\boldsymbol\sig$ leaves the instantaneous $\delta$-covering set $\Phi_{\delta}$.}
    \label{fig:qnt_te}
    \vspace{-3mm}
\end{figure}
\begin{figure}
    \centering
    \includegraphics[trim=0 1cm 0 0cm, clip, width=0.6\textwidth]{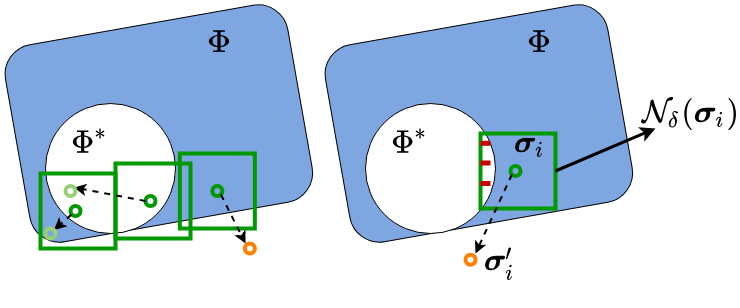}
    \caption{A counter example illustrating the incompleteness of Algorithm~\ref{alg:qnt_te}: as shown in the right subplot, suppose the one-step simulation of~\eqref{eq:snapshot-dyn} from $\boldsymbol\sig$ obtains $\boldsymbol\sig' \notin \Phi_{\delta}$, one shall remove $\mathcal{N}_{\delta}(\boldsymbol\sig)$ from the sample space, this may lead to the removal of the subset that is part of the underlying almost robustly controlled forward invariant set $\Phi^*$ (denoted in the red color). Meanwhile, the left subplot shows some cases when Algorithm~\ref{alg:qnt_te} operates appropriately.}
    \label{fig:qnt_te_incomplete}
    \vspace{-3mm}
\end{figure}

\begin{theorem}\label{thm:te-incomplete}
    Algorithm~\ref{alg:qnt_te} is probabilistic incomplete.
\end{theorem}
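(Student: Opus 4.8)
The plan is to prove incompleteness by exhibiting a single \emph{feasible} instance on which the limiting success probability is bounded away from one; in fact I will drive it to zero. Recall that probabilistic completeness (Problem~\ref{prob:feasible-qnt}) requires $\limsup_{N\to\infty}\mathbb{P}(O(\mathcal{QNT}(N,\Sigma,\mathcal{U})))=1$, so it suffices to construct a scenario system~\eqref{eq:snapshot-dyn} together with compact $\Sigma$ and $\Gamma$ admitting a nonempty robustly controlled forward invariant set $\Phi^*$ (feasibility) for which $\limsup_{N\to\infty}\mathbb{P}(O(\mathcal{QNT}^{\delta\mathrm{P}}(N,\Sigma,\Gamma)))=0$. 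Unlike the argument sketched for Theorem~\ref{thm:vs-incomplete}, here the per-iteration behaviour \emph{does} change with $N$ because of the $\delta$-pruning, so the proof cannot rest on a constant success probability; instead I will show that the refinement itself is \emph{destructive}, i.e.\ with probability tending to one it permanently deletes a neighborhood that is essential for invariance.

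Second, I would build the counterexample around the mechanism depicted in Figure~\ref{fig:qnt_te_incomplete}. Choose a low-dimensional $\Sigma$ and dynamics whose unique nonempty robustly controlled forward invariant set $\Phi^*$ has a \emph{recurrent, corridor-like} structure: the controlled flow circulates strictly along $\Phi^*$ so that every orbit repeatedly crosses each cross-section of the corridor. Make the corridor thin relative to $\delta$ so that a single grid center $\boldsymbol\sigma_b \in \Phi_\sigma \setminus \Phi^*$ lying just outside the corridor has a neighborhood $\mathcal{N}_{\delta}(\boldsymbol\sigma_b)$ that (i) spans an entire cross-section of $\Phi^*$ and (ii) is the only surviving cover of that cross-section among the grid neighborhoods. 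Two properties are then arranged by construction: \textbf{irreducibility} --- deleting $\mathcal{N}_{\delta}(\boldsymbol\sigma_b)$ from the cover leaves no grid-subset forward invariant, since every remaining orbit flows into the now-uncovered cross-section and exits the output set; and \textbf{spurious escape} --- because $\boldsymbol\sigma_b \notin \Phi^*$, a trajectory launched from $\boldsymbol\sigma_b$ under $\uu \sim U(\Gamma)$ leaves $\Phi_{\delta}$ with some probability $p>0$, triggering the pruning rule in Algorithm~\ref{alg:qnt_te}. Note that centers inside $\Phi^*$ never trigger pruning, since invariance w.r.t.\ $\Gamma$ keeps their trajectories inside $\Phi_{\delta}$; hence the only damage is the collateral removal highlighted in red in Figure~\ref{fig:qnt_te_incomplete}.

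Third, I would analyze the pruning process. Since $\Sigma$ is compact and $\delta$ is fixed, the center set $\Phi_\sigma$ is finite, so the active centers $\Phi_\sigma \setminus \bar\Phi$ form a finite set and $\boldsymbol\sigma_b$ remains active until it is pruned. Consequently $\boldsymbol\sigma_b$ is sampled on a positive fraction of iterations, and each time it is sampled it is appended to $\bar\Phi$ with probability at least $p$; therefore $\mathbb{P}(\boldsymbol\sigma_b \in \bar\Phi \text{ after } N \text{ iterations}) \to 1$ as $N \to \infty$. On the pruning event, the cross-section covered solely by $\mathcal{N}_{\delta}(\boldsymbol\sigma_b)$ is removed from the output, and by irreducibility the output fails to be robustly controlled forward invariant w.r.t.\ $\Gamma$. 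Hence $\mathbb{P}(O(\mathcal{QNT}^{\delta\mathrm{P}}(N,\Sigma,\Gamma))) \le \mathbb{P}(\boldsymbol\sigma_b \notin \bar\Phi) \to 0$, so the $\limsup$ equals $0 < 1$ and Algorithm~\ref{alg:qnt_te} is probabilistic incomplete.

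Finally, I expect the main obstacle to be the counterexample construction rather than the probabilistic bookkeeping. The delicate requirement is irreducibility together with unique coverage: I must guarantee that, after $\mathcal{N}_{\delta}(\boldsymbol\sigma_b)$ is deleted, no surviving union of grid neighborhoods is invariant, which in particular rules out the benign situation (left subplot of Figure~\ref{fig:qnt_te_incomplete}) in which pruning merely trims an inward-flowing boundary layer that stays invariant. I would resolve this with a strictly circulating flow on a cyclic corridor of width below $2\delta$, so that (a) cutting the corridor anywhere breaks invariance instantly and (b) one off-corridor center's neighborhood spans the full width of the cut while no other center whose neighborhood covers that width survives the same trajectory test. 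Verifying the feasibility of $\Phi^*$ and that precisely this center is responsible for the essential coverage is where the careful case analysis lies.
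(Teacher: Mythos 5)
Your proposal is correct and rests on exactly the mechanism the paper uses: the paper's proof is a short contradiction sketch via the counterexample of Figure~\ref{fig:qnt_te_incomplete}, where pruning $\mathcal{N}_{\delta}(\boldsymbol\sigma)$ for a disqualified center $\boldsymbol\sigma \notin \Phi^*$ collaterally deletes part of the underlying invariant set, which is precisely your ``spurious escape'' plus ``irreducibility'' argument. You go further than the paper by making the construction concrete (the thin recurrent corridor) and by showing the failure probability tends to one rather than merely being positive, but the essential idea is the same.
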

The proof of Theorem~\ref{thm:te-incomplete} can be shown by contradiction. A conceptual case is shown in Figure~\ref{fig:qnt_te_incomplete}. The fundamental problem is that the disqualification of a certain $\boldsymbol\sig$ can not be generalized to every point in its $\delta$-neighborhood. Hence $\mathcal{N}_{\delta}(\boldsymbol\sig)$ cannot be ruled out of consideration. 

To a certain extent, Algorithm~\ref{alg:qnt_te} performs the set search hoping that one would converge to a feasible solution by continually removing unqualified points from the set. Thus, it is a natural extension to consider the set search from the opposite direction presented through the Algorithm~\ref{alg:qnt_ae} and Figure~\ref{fig:qnt_ae} with ``adaptive exploration".

\begin{algorithm}
    \begin{algorithmic}[1]
    \State {\bf Input:} Sample size $N \in \Z$, state sample space $\Sigma$, admissible action space $\Gamma$, trajectory horizon $K$, parameters $\epsilon, \beta, \delta_0, \gamma\in(0,1)$.
    \State {\bf Initialize: } $N_{\epsilon}=\frac{\ln{\beta}}{\ln{(1-\epsilon)}}$, $\delta_t=\delta_0$, $\Phi_{\sig} = \{\boldsymbol\sigma_0\}, \boldsymbol\sig_0 \sim U(\Sigma), i=1, n=1$
    \State {{\bf While} $n<N$:}
    \State {\ \ \ \ {\bf While} $i<N_{\epsilon}$:}
    \State {\ \ \ \ \ \ \ \ $\boldsymbol\sigma \sim U(\Phi_{\sig})$}
    \State {\ \ \ \ \ \ \ \ $\mathcal{T}=\mathcal{RS}(\boldsymbol\sigma, K, U(\Gamma))$}
    \State {\ \ \ \ \ \ \ \ {\bf For} $\boldsymbol\sig'$ in $\mathcal{T}[2:]$ {\bf do}}
    \State {\ \ \ \ \ \ \ \ \ \ \ \ {\bf If} $\boldsymbol\sig' \notin \Sigma$}
    \State {\ \ \ \ \ \ \ \ \ \ \ \ \ \ \ \ $\Phi_{\sig} = \{\boldsymbol\sigma_0\}, \boldsymbol\sig_0 \sim U(\Sigma)$}
    \State {\ \ \ \ \ \ \ \ \ \ \ \ \ \ \ \ $i=1$}
    \State {\ \ \ \ \ \ \ \ \ \ \ \ \ \ \ \ {\bf break}}
    \State {\ \ \ \ \ \ \ \ \ \ \ \ {\bf End If}}
    \State {\ \ \ \ \ \ \ \ \ \ \ \ {\bf If} $\boldsymbol\sigma' \notin \Phi_{\delta_t}$}
    \State {\ \ \ \ \ \ \ \ \ \ \ \ \ \ \ \ $\Phi_{\sig}\text{.append}(\boldsymbol\sigma')$}
    \State {\ \ \ \ \ \ \ \ \ \ \ \ \ \ \ \ $i=1$}
    \State {\ \ \ \ \ \ \ \ \ \ \ \ {\bf End If}}
    \State {\ \ \ \ \ \ \ \ {\bf End For}}
    \State {\ \ \ \ \ \ \ \ {\bf If} $n=N$}
    \State {\ \ \ \ \ \ \ \ \ \ \ \ {\bf break}}
    \State {\ \ \ \ \ \ \ \ {\bf End If}}
    \State {\ \ \ \ \ \ \ \ $i$+=$1$, $n$+=$1$}
    \State {\ \ \ \ $\delta_t=\gamma\delta_t$, update $\Phi_{\delta_t}$ w.r.t. $\Phi_{\sig}$}
    \State {{\bf Output:} $\Phi_{\delta_t}, \Gamma$}
    \end{algorithmic}
    \caption{Adaptive Exploration Safety Validation $\mathcal{QNT}^{\text{AE}}(N, \Sigma, \Gamma)$} \label{alg:qnt_ae}
\end{algorithm}

\begin{figure*}
    \vspace{3mm}
    \centering
    \includegraphics[width=0.9\linewidth]{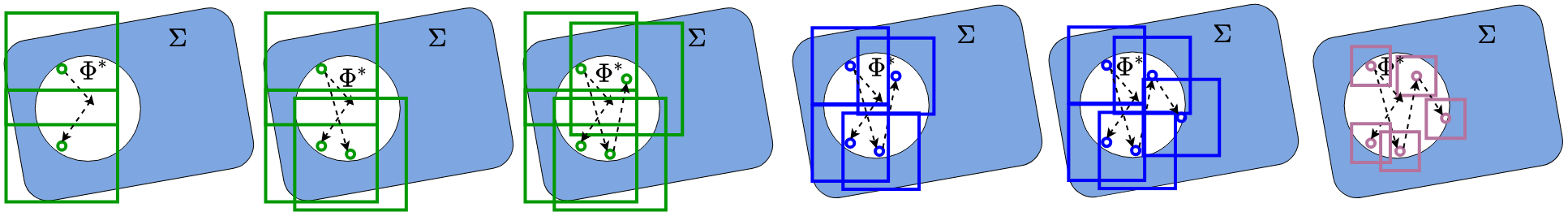}
    \caption{A conceptual illustration of Algorithm~\ref{alg:qnt_ae}.}
    \label{fig:qnt_ae}
    \vspace{-3mm}
\end{figure*}

Intuitively, Algorithm~\ref{alg:qnt_ae} recursively executes a two-step procedure. First, for the instantaneous $\delta_t$, one keeps expanding its reachable domain by recursively adding newly found states to the domain, which ultimately reaches an $\epsilon\delta_t$-almost robustly controlled forward invariant set. One then takes the second step of modifying $\delta_t$ by multiplying the decay coefficient $\gamma$, which potentially creates ``holes" for future exploration to fill.
With only mild assumption upon the initialization of $\boldsymbol\sigma_0$, we have the following result.
\begin{theorem}\label{thm:ie-complete}
    Assume there exists a robustly controlled forward invariant set $\Phi' \subset \Sigma$ and $\boldsymbol\sigma_0 \in \bar{\Phi}$. Let the actions $\uu$ be i.i.d. w.r.t. the underlying distribution on $\Gamma$. Algorithm~\ref{alg:qnt_ae} is probabilistic complete.
\end{theorem}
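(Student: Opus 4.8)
The plan is to recognize the set that Algorithm~\ref{alg:qnt_ae} is actually approximating, namely the forward reachable set of $\boldsymbol\sigma_0$ under the fixed admissible action space $\Gamma$ (Assumption~\ref{asp:fix-gamma}),
\begin{equation}
    \mathcal{R}(\boldsymbol\sigma_0) := \big\{\boldsymbol\sigma\in\Sigma \mid \boldsymbol\sigma \text{ is reachable from } \boldsymbol\sigma_0 \text{ under some finite action sequence in } \Gamma \text{ and disturbance in } \W \big\}.
\end{equation}
First I would show $\mathcal{R}(\boldsymbol\sigma_0)$ is itself robustly controlled forward invariant w.r.t. $\Gamma$: for any $\boldsymbol\sigma\in\mathcal{R}(\boldsymbol\sigma_0)$ and any $\uu\in\Gamma,\boldsymbol\omega\in\W$, the point $f(\boldsymbol\sigma,\uu;\boldsymbol\omega)$ is reachable by extending the witnessing trajectory, so it again lies in $\mathcal{R}(\boldsymbol\sigma_0)$; taking the ``good'' control set of Definition~\ref{def:robust-cfi} to be all of $\Gamma$ then yields invariance. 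The assumption $\boldsymbol\sigma_0\in\bar\Phi$ places $\boldsymbol\sigma_0$ inside a robustly controlled forward invariant set $\Phi'\subseteq\Sigma$, whence $\mathcal{R}(\boldsymbol\sigma_0)\subseteq\Phi'\subseteq\Sigma$ is bounded and never escapes $\Sigma$. Consequently the reset branch triggered by $\boldsymbol\sigma'\notin\Sigma$ is almost surely inactive after initialization, and the accumulated domain $\Phi_{\sig}$ is a monotonically growing finite subset of $\mathcal{R}(\boldsymbol\sigma_0)$.

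Next I would analyze a single outer iteration at a fixed resolution $\delta_t$, where the inner loop is exactly the $\epsilon\delta$-almost validation routine $\mathcal{VAL}^{\epsilon\delta}$ executed over the current cover. By Theorem~\ref{thm:prob-complete-val} and Lemma~\ref{lma:prob-complete-val}, $N_\epsilon=\ln\beta/\ln(1-\epsilon)$ consecutive clean transitions certify $\Phi_{\delta_t}$ to be $\epsilon\delta_t$-almost robustly controlled forward invariant (Definition~\ref{def:almost-cpis}) with confidence at least $1-\beta$; otherwise a transition lands in $\Sigma\setminus\Phi_{\delta_t}$, the new state is appended to $\Phi_{\sig}$ and the counter resets. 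Because $\mathcal{R}(\boldsymbol\sigma_0)$ is bounded and each appended point carries a $\delta_t$-neighbourhood of volume on the order of $(2\delta_t)^n$, at most $\lceil |\mathcal{R}(\boldsymbol\sigma_0)| / (2\delta_t)^n \rceil$ points can ever be added before the cover saturates, by the same counting used in Theorem~\ref{thm:res-complete-val}. Hence the inner loop terminates after finitely many appends, the outer loop advances, $\delta_t$ is multiplied by $\gamma$, and after $m$ outer iterations $\delta_t=\gamma^m\delta_0\to0$ as the budget $N\to\infty$.

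The crux is the refinement step: each contraction $\delta_t\mapsto\gamma\delta_t$ shrinks the balls around the points of $\Phi_{\sig}$ and opens ``holes'' in $\Phi_{\delta_t}$ that further exploration must fill. I would establish a coverage claim: since states are drawn i.i.d.\ from $U(\Phi_{\sig})$ and actions i.i.d.\ from $U(\Gamma)$, for every $\boldsymbol\sigma\in\mathcal{R}(\boldsymbol\sigma_0)$ and every $r>0$ the probability that no discovered point lies within $r$ of $\boldsymbol\sigma$ tends to $0$ as $N\to\infty$; propagating this one-step statement along the finite witnessing trajectories that define $\mathcal{R}(\boldsymbol\sigma_0)$ shows $\Phi_{\sig}$ becomes dense in $\mathcal{R}(\boldsymbol\sigma_0)$. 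Consequently, with probability tending to one, $\mathcal{R}(\boldsymbol\sigma_0)\subseteq\Phi_{\delta_t}$; since $\mathcal{R}(\boldsymbol\sigma_0)$ is invariant under every $\uu\in\Gamma$, every one-step transition from the cover remains inside it, which upgrades the $\epsilon\delta_t$-almost invariance certified at each stage to genuine robust controlled forward invariance as $\delta_t\to0$ (the vanishing resolution playing the role it plays in Theorem~\ref{thm:res-complete-val}). This gives $\limsup_{N\to\infty}\mathbb{P}\big(O(\mathcal{QNT}^{\text{AE}}(N,\Sigma,\Gamma))\big)=1$, i.e.\ probabilistic completeness in the sense of Problem~\ref{prob:feasible-qnt}.

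The hardest step, and the one deserving the most care, is the density claim above, because the sampling law $U(\Phi_{\sig})$ changes every time a point is appended and every time $\delta_t$ contracts, so the draws are neither identically distributed over the whole run nor independent of the evolving set. I would handle this by a nested Borel--Cantelli argument conditioned on the finite sequence of cover saturations, lower-bounding the per-round probability of reaching into any fixed hole by the product of the probability of sampling the adjacent $\delta_t$-ball and a positive lower bound on the $U(\Gamma)$-measure of actions that push the state into the hole, the latter following from continuity of $f$ and the nonemptiness guaranteed by Definition~\ref{def:robust-cfi}. Additional care is needed to confirm that interleaving the inner- and outer-loop counters does not starve any resolution level of its $N_\epsilon$ budget as $N\to\infty$.
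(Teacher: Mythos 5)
Your proposal is correct and its backbone coincides with the paper's own argument: fix $\delta_t$, observe that $\Phi_{\sig}$ grows monotonically and must saturate after finitely many appends because it lives inside a compact invariant set, note that the post-saturation phase is exactly the validation problem certified by Theorem~\ref{thm:prob-complete-val} with $N_\epsilon$ consecutive clean samples, and then iterate over the $\gamma$-decayed resolutions as $N\to\infty$. Where you genuinely go beyond the paper is in making precise \emph{what} the algorithm converges to: you identify the forward reachable set $\mathcal{R}(\boldsymbol\sigma_0)$ under $\Gamma$, prove it is itself robustly controlled forward invariant and contained in $\Phi'$ (which also explains why the reset branch never fires after initialization), replace the paper's bare assertion of a finite saturation index $M$ with an explicit packing bound of order $|\mathcal{R}(\boldsymbol\sigma_0)|/(2\delta_t)^n$ in the spirit of Theorem~\ref{thm:res-complete-val}, and confront head-on the non-i.i.d.\ nature of sampling from the evolving $U(\Phi_{\sig})$ via a conditional Borel--Cantelli argument — issues the paper's proof passes over in silence. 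The density step you flag as the hardest is indeed where the remaining work lies, and it matters more than you might expect: Problem~\ref{prob:feasible-qnt} asks for genuine robust controlled forward invariance of the output, whereas each finite stage only certifies $\epsilon\delta_t$-almost invariance of $\Phi_{\delta_t}$, which strictly contains $\mathcal{R}(\boldsymbol\sigma_0)$ by a $\delta_t$-margin of points that need not be invariant; your proposed limit $\delta_t\to 0$ is the right mechanism to close that gap, and it is a gap the paper's own proof does not explicitly close either. The one caveat worth recording is your (correct) reading of $\boldsymbol\sigma_0\in\bar{\Phi}$ in the theorem statement as $\boldsymbol\sigma_0\in\Phi'$, consistent with how the paper's proof actually uses the hypothesis.
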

\begin{proof}
First, consider fixing the choice of $\delta_t=\delta_0$, i.e. $\gamma=1$. Algorithm~\ref{alg:qnt_ae} ensures the non-decreasing property of $|\Phi_{\sig}|$, i.e., $\Phi_{\sig}$ either expands with new added states or remains unchanged. Consider the assumed $\Phi'$ in the theorem is compact and $\boldsymbol\sigma_0 \in \Phi'$, there shall exist $M \in \Z$ and $M < \infty$ such that $\forall i \geq M, i\in Z$, $\Phi_{\sig}$ no longer expands. To this end, the execution process of Algorithm~\ref{alg:qnt_ae} is further divided into two consecutive stages. Before reaching the $M$-th iteration, one keeps searching for new added states to fill in $\Phi_{\sig}$. The second stage (after the $M$-th iteration) is essentially a safety validation problem. Applying Theorem~\ref{thm:prob-complete-val} one directly has the desired $\epsilon\delta$-almost robustly controlled forward invariance property with $N_{\epsilon}$ consecutive samples. Second, for $\gamma \in (0,1)$, one repeatedly executes the first step for each unique $\delta_t$. As $N$ tends to infinity, the probabilistic completeness is obtained.
\end{proof}
\begin{remark} \label{rmk:unify_val_and_qnt}
    The above proof reveals an important intrinsic property that a safety quantification algorithm always terminates with a safety validation algorithm. This unifies what we have presented in Section~\ref{sec:validation} and~\ref{sec:quantification}.
\end{remark}

Recall that the robustly controlled forward invariant subset of $\Sigma$ is not necessarily unique. Algorithm~\ref{alg:qnt_ae} is guaranteed to find one of them but is not capable of finding all of them. This leads to the optimal safety quantification problem next.

\subsection{Towards optimal safety quantification}
Consider the following cost function for Problem~\ref{prob:optimal-qnt}  
\begin{equation}\label{eq:cost}
    c(\Phi, \Gamma) = -|\Phi \times \Gamma|, c: \Sigma \times \mathcal{U} \rightarrow \R_{\leq0}.   
\end{equation}
That is, one seeks to find the maximal invariant set, i.e., the union of all safe sub-domains for the SV. We have the following assessment regarding Algorithm~\ref{alg:qnt_ae}.
\begin{theorem}\label{thm:ie-nonoptimal}
    Consider the cost function~\eqref{eq:cost} and assume the optimal cost $c^*$ is feasible. Algorithm~\ref{alg:qnt_ae} is not asymptotically optimal.
\end{theorem}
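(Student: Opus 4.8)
The plan is to prove Theorem~\ref{thm:ie-nonoptimal} by exhibiting a single counterexample instance of the scenario system~\eqref{eq:snapshot-dyn} for which condition~2 of Problem~\ref{prob:optimal-qnt} has probability strictly less than one. The structural feature to exploit is that the maximal robustly controlled forward invariant set --- the minimizer of the cost~\eqref{eq:cost} --- need not be connected, whereas Algorithm~\ref{alg:qnt_ae} grows a set purely by forward reachability from a single seed. Concretely, I would construct $\Sigma$ and $\Gamma$ so that the maximal invariant set decomposes as $\Phi^* = \Phi^*_1 \cup \Phi^*_2$, where $\Phi^*_1$ and $\Phi^*_2$ are two disjoint robustly controlled forward invariant sets (Definition~\ref{def:robust-cfi}), each of positive cardinality, separated by an $\ell_\infty$-gap strictly larger than $2\delta_0$. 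Since the union of robustly controlled forward invariant sets is again robustly controlled forward invariant, the optimum $c^* = c(\Phi^*, \Gamma) = -|\Phi^* \times \Gamma|$ is feasible, as required by Problem~\ref{prob:optimal-qnt}.

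The heart of the argument is a \emph{confinement lemma}: once the seed $\boldsymbol\sigma_0$ lands inside one component, say $\Phi^*_1$, the iterate set $\Phi_{\sig}$ maintained by Algorithm~\ref{alg:qnt_ae} remains a subset of $\Phi^*_1$ for the entire remaining execution, across all $\delta$-decay stages. I would prove this by induction on the iteration count. The only two ways $\Phi_{\sig}$ changes are the reset branch, which fires solely when a simulated trajectory exits $\Sigma$, and the expansion branch, which appends a propagated state $\boldsymbol\sigma'$. By forward invariance of $\Phi^*_1$, every trajectory seeded in $\Phi^*_1$ stays in $\Phi^*_1 \subseteq \Sigma$, so the reset branch never triggers and every appended $\boldsymbol\sigma'$ again lies in $\Phi^*_1$; this closes the induction. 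Consequently every covering set $\Phi_{\delta_t}$ lies in the $\delta_t$-neighborhood of $\Phi^*_1$, which for all $\delta_t \le \delta_0$ is disjoint from $\Phi^*_2$ by the gap condition, so the output never overlaps $\Phi^*_2$.

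Given the confinement lemma the conclusion is immediate. Whenever $\boldsymbol\sigma_0$ is drawn from a component --- an event of positive probability at each fresh draw, hence occurring almost surely, since every non-invariant seed produces under $U(\Gamma)$ a trajectory leaving $\Sigma$ with positive probability and thereby triggers another draw --- the output stabilizes to a single component; as $N \to \infty$ the decay $\delta_t = \gamma^{\,m}\delta_0 \to 0$ drives $|\Phi_{\delta_t} \times \Gamma| \to |\Phi^*_k \times \Gamma|$ for that one $k$. Therefore $\limsup_{N\to\infty} c(\mathcal{QNT}(N,\Sigma,\U)) = -|\Phi^*_k \times \Gamma| > -|\Phi^* \times \Gamma| = c^*$ almost surely, so $\mathbb{P}\big(\{\limsup_{N\to\infty} c = c^*\}\big) = 0 < 1$ and condition~2 of Problem~\ref{prob:optimal-qnt} fails. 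Probabilistic completeness nonetheless holds by Theorem~\ref{thm:ie-complete}, which is exactly why the algorithm is feasible but not optimal.

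The main obstacle I anticipate is ruling out that the interaction among the infinite sampling budget, the repeated resets, and the $\delta$-decay hole-filling mechanism could somehow stitch the two components together. The delicate point is that a reset wipes $\Phi_{\sig}$ back to a single seed, so no single run ever simultaneously holds states from two components, and the refinement stage (decreasing $\delta_t$) only reallocates coverage within the already-discovered component. Enforcing a gap exceeding $2\delta_0$, together with the monotone decrease of $\delta_t$, is what guarantees that no $\delta_t$-neighborhood can bridge the components at any stage; I would state this separation condition carefully and verify it is compatible with the Lipschitz continuity of $f$ and with Assumption~\ref{asp:bound_sig} on~\eqref{eq:snapshot-dyn}, so that the counterexample is a legitimate instance of the studied system rather than a degenerate artifact.
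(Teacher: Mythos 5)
Your proposal is correct and takes essentially the same route as the paper: the paper's own (much terser) argument is exactly this counterexample of two disjoint robustly controlled forward invariant sets $\Phi_1^*$ and $\Phi_2^*$ with the seed $\boldsymbol\sigma_0$ trapped in one component, so the algorithm converges to that component only and cannot attain $c^* = -|(\Phi_1^*\cup\Phi_2^*)\times\Gamma|$. Your added confinement lemma and the $2\delta_0$ separation condition merely make rigorous what the paper asserts as ``immediate,'' and note that for the conclusion it suffices that the seed lands in a single component with positive probability, so the weaker part of your final step already closes the argument.
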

The above theorem is immediate. For example, assume the existence of two robustly controlled forward invariant sets $\Phi_1^*$ and $\Phi_2^*$. Let $\Phi_1^* \cap \Phi_2^* = \emptyset$. Suppose the initialization $\boldsymbol\sigma_0 \in \Phi_1^*$. It follows that Algorithm~\ref{alg:qnt_ae} only guarantees the convergence to $\Phi_1^*$ and is not able to quantify $\Phi_2^*$.

\begin{figure*}
    \vspace{3mm}
    \centering
    \includegraphics[width=0.95\linewidth]{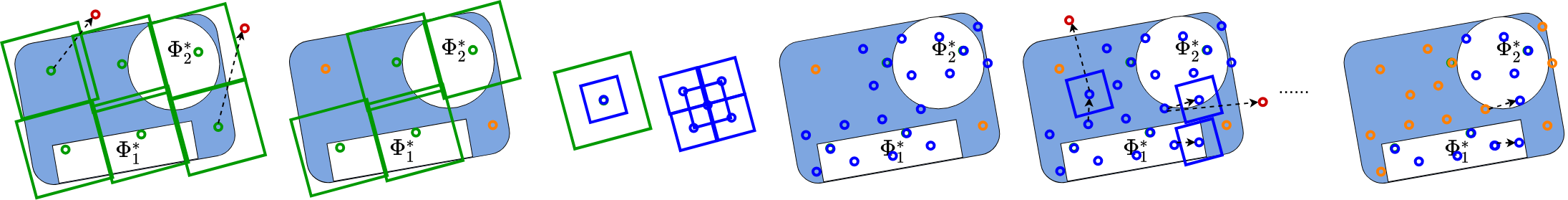}
    \caption{A conceptual illustration of Algorithm~\ref{alg:qnt_spe}.}
    \label{fig:qnt_spe}
    \vspace{-3mm}
\end{figure*}

The asymptotically optimal solution is not unique as addressed by Remark~\ref{rmk:unique}. For example, by executing Algorithm~\ref{alg:qnt_ae} in the ``multiple shooting" manner~\cite{bock1984multiple} where a sufficient number of $\boldsymbol\sigma_0$'s are initialized with sufficiently small $\delta_0$, the algorithm becomes asymptotically optimal for the cost function~\eqref{eq:cost}. However, the following proposal is practically more efficient than the multiple shooting extension of $\mathcal{QNT}^{\text{AE}}(N, \Sigma, \Gamma)$. In particular, we consider combing the advantages of Algorithm~\ref{alg:qnt_te} and Algorithm~\ref{alg:qnt_ae}, supplied with other structural improvements, leading to a ``Synchronous Pruning and Exploration" (SPE) scheme detailed in Algorithm~\ref{alg:qnt_spe} and Figure~\ref{fig:qnt_spe}. 

\begin{algorithm}[H]
    \begin{algorithmic}[1]
    \State {\bf Input:} Sample size $N \in \Z$ for validation, state sample space $\Sigma$, admissible action space $\Gamma$, trajectory horizon $K$, parameters $\epsilon, \delta_0, \gamma\in(0,1)$.
    \State {\bf Initialize: } $N_{\epsilon}=\frac{\ln{\beta}}{\ln{(1-\epsilon)}}$, $\delta_t=\delta_0$, $\Phi_{\delta_t}, \Phi_{\sig_t}$ obtained from the $\delta_t$-coverage of $\Phi=\Sigma$, $\bar{\Phi}=\emptyset$, set of edges $\Phi_{E_t}=\emptyset$, graph $G=(\Phi_{\sig_t}, \Phi_{E_t}), n=1$,
    \State {{\bf While} $n<N$:}
    \State {\ \ \ \ $i=1$}
    \State {\ \ \ \ {\bf While} $i<N_{\epsilon}$:}
    \State {\ \ \ \ \ \ \ \ $\boldsymbol\sigma \sim U(\Phi_{\sig_t})$}
    \State {\ \ \ \ \ \ \ \ $\mathcal{T}=\mathcal{RS}(\boldsymbol\sigma, K, U(\Gamma))$}
    \State {\ \ \ \ \ \ \ \ {\bf For} $\boldsymbol\sig'$ in $\mathcal{T}[2:]$ {\bf do}}
    \State {\ \ \ \ \ \ \ \ \ \ \ \ {\bf If} $\boldsymbol\sig' \notin \Sigma$}
    \State {\ \ \ \ \ \ \ \ \ \ \ \ \ \ \ \ $\Phi_{\sig_t}\text{.discard}(\text{reachable}(\boldsymbol\sig, G))$}
    \State {\ \ \ \ \ \ \ \ \ \ \ \ \ \ \ \ $\bar{\Phi}\text{.append}(\boldsymbol\sigma)$}
    \State {\ \ \ \ \ \ \ \ \ \ \ \ \ \ \ \ $i=1$}
    \State {\ \ \ \ \ \ \ \ \ \ \ \ \ \ \ \ {\bf break}}
    \State {\ \ \ \ \ \ \ \ \ \ \ \ {\bf End If}}
    \State {\ \ \ \ \ \ \ \ \ \ \ \ {\bf If} $d(\boldsymbol\sig', \Phi_{\delta_t})>\delta_t$ and $d(\boldsymbol\sig, \bar{\Phi})>\delta_t$}
    \State {\ \ \ \ \ \ \ \ \ \ \ \ \ \ \ \ $\Phi_{\sig_t}\text{.append}(\boldsymbol\sigma')$}
    \State {\ \ \ \ \ \ \ \ \ \ \ \ \ \ \ \ $\Phi_{E_t}\text{.append}((\boldsymbol\sig, \boldsymbol\sigma'))$}
    \State {\ \ \ \ \ \ \ \ \ \ \ \ \ \ \ \ $i=1$}
    \State {\ \ \ \ \ \ \ \ \ \ \ \ {\bf End If}}
    \State {\ \ \ \ \ \ \ \ {\bf End For}}
    \State {\ \ \ \ \ \ \ \ {\bf If} $n=N$}
    \State {\ \ \ \ \ \ \ \ \ \ \ \ {\bf break}}
    \State {\ \ \ \ \ \ \ \ {\bf End If}}
    \State {\ \ \ \ \ \ \ \ $i$+=$1$, $n$+=$1$}
    \State {\ \ \ \ $\Phi'_{\sig_t}$ obtained from the $\gamma\delta_t$-coverage of $\Phi_{\delta_t}$}
    \State {\ \ \ \ $\Phi_{\sig_t} = \Phi_{\sig_t} \cup \{\boldsymbol\sig\in\Phi'_{\sig_t} \mid d(\boldsymbol\sig, \bar{\Phi})>\gamma\delta_t\}$}
    \State {\ \ \ \ $\delta_t = \gamma\delta_t$}
    \State {{\bf Output:} $\Phi_{\delta_t}$, $\Gamma$}
    \end{algorithmic}
    \caption{$\mathcal{QNT}^{\text{SPE}}(N, \Sigma, \Gamma)$} \label{alg:qnt_spe}
\end{algorithm}

The algorithm starts by formulating a $\delta_t$-covering set of $\Sigma$. All $\boldsymbol\sig$'s are further formulated as vertices of a graph $G$. After that, the exploration consists of three main components, the pruning, the exploration, and the decay of $\delta_t$ when states are not being added to or removed from $\Phi_{\sig}$ for sufficiently many steps. Note that the function $\text{reachable}(\boldsymbol\sig, G)$ denotes all vertices that directly or indirectly connect to $\boldsymbol\sig$ on the graph $G$. We further emphasize the following points that make Algorithm~\ref{alg:qnt_spe} notably different from the previous proposed algorithms. First, comparing line 9 in Algorithm~\ref{alg:qnt_spe} with line 7 in Algorithm~\ref{alg:qnt_te} leads differnt pruning schemes. Second, the adaptive exploration in $\mathcal{QNT}^{\text{SPE}}(N, \Sigma, \Gamma)$ differs from that in $\mathcal{QNT}^{\text{AE}}(N, \Sigma, \Gamma)$ in two ways. (i) The newly added points cannot be too close to $\bar{\Phi}$ through the conditional check in line 15 of Algorithm~\ref{alg:qnt_spe}. (ii) Every time one decays $\delta_t$, not only that the existing $\delta_t$-neighborhoods get ``shrunk", the original $\Phi_{\delta_t}$ is also covered by the newly created covering set of $\Phi_{\gamma\delta_t}$ (line 25-27 in Algorithm~\ref{alg:qnt_spe}).

The optimal property of Algorithm~\ref{alg:qnt_spe} is further detailed in Theorem~\ref{thm:spe-optimal} in conjunction with Assumption~\ref{asp:sufficient_delta0}.

\begin{assumption}\label{asp:sufficient_delta0}
    Suppose the optimal solution $(\Phi^*, \Gamma)$ compatible with the cost function~\eqref{eq:cost} contains all of the $k$ robustly controlled forward invariant subsets of $\Sigma$, i.e., $\Phi^* = \bigcup_{i\in\Z_k}\ \Phi_i^*$. Assume the $\delta_0$ initialization of Algorithm~\ref{alg:qnt_spe} satisfies $(\delta_0/2)^n \leq \underset{i\in\Z_k}{\min}|\Phi_i^*|$.
\end{assumption}
Intuitively, by forcing $\delta_0$ to be sufficiently small, Assumption~\ref{asp:sufficient_delta0} ensures the existence of an initialized $\mathcal{N}_{\delta_0}(\boldsymbol\sigma)$ such that $\mathcal{N}_{\delta_0}(\boldsymbol\sigma)\subseteq\Phi_i^*, \forall i \in \Z_k$. In practice, {$\delta_0$} is given by user that varies with cases. It is always possible to make {$\delta_0$} sufficiently small to ensure Assumption{~\ref{asp:sufficient_delta0}}.
\begin{theorem}\label{thm:spe-optimal}
    Supplied with the cost function~\eqref{eq:cost}, Algorithm~\ref{alg:qnt_spe} ($\mathcal{QNT}^{\text{SPE}}(N, \Sigma, \Gamma)$) is asymptotically optimal for a sufficiently small $\delta_0$ deemed by Assumption~\ref{asp:sufficient_delta0}.
\end{theorem}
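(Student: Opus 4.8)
The plan is to verify the two conditions defining asymptotic optimality in Problem~\ref{prob:optimal-qnt} separately: (i) probabilistic completeness, and (ii) almost-sure convergence of the cost to $c^*$. By the cost function~\eqref{eq:cost} and the fixed $\Gamma$ (Assumption~\ref{asp:fix-gamma}), the optimum corresponds to recovering the maximal invariant set $\Phi^* = \bigcup_{i\in\Z_k}\Phi_i^*$, so condition (ii) amounts to showing that the output cover $\Phi_{\delta_t}$ satisfies $|\Phi_{\delta_t}|\to|\Phi^*|$ with probability one.

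For condition (i), I would argue exactly as in the proof of Theorem~\ref{thm:ie-complete}. Fixing $\delta_t$ (equivalently $\gamma=1$), the compactness of $\Sigma$ together with the finiteness of the $\delta_t$-cover guarantees that $\Phi_{\sig_t}$ is modified only finitely often: each pruning strictly removes vertices and each exploration step admits only points at least $\gamma\delta_t$ away from $\bar{\Phi}$, so after some finite iteration $M$ the set $\Phi_{\sig_t}$ stabilizes. Beyond $M$ the inner loop is precisely an $\epsilon\delta_t$-almost safety validation on the surviving set, so Theorem~\ref{thm:prob-complete-val} yields the $\epsilon\delta_t$-almost robustly controlled forward invariance after $N_{\epsilon}$ consecutive samples; letting $N\to\infty$ gives probabilistic completeness, consistent with Remark~\ref{rmk:unify_val_and_qnt}.

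For condition (ii) I would establish two inclusions in the limit. \emph{Soundness of pruning:} pruning triggers only when a propagated state leaves the whole space $\Sigma$ (line 9), not merely the instantaneous cover $\Phi_{\delta_t}$; hence every removed vertex admits, through the graph $G$ and $\text{reachable}(\boldsymbol\sig, G)$, an admissible trajectory that exits $\Sigma$, and since $\Phi^*\subseteq\Sigma$ is invariant w.r.t.\ $\Gamma$, such a vertex cannot lie in $\Phi^*$; thus $\Phi^*\cap\bar{\Phi}=\emptyset$ is preserved throughout. This is exactly the correction over Algorithm~\ref{alg:qnt_te}, whose incompleteness (Theorem~\ref{thm:te-incomplete}) stemmed from pruning against the instantaneous cover. \emph{Completeness of the cover:} because SPE is seeded with a $\delta_0$-cover of the entire $\Sigma$ rather than a single point, every component $\Phi_i^*$ is represented from initialization, and Assumption~\ref{asp:sufficient_delta0} ensures each $\Phi_i^*$ contains at least one full $\delta_0$-neighborhood, so no component is missed. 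As $\delta_t=\gamma^t\delta_0\to 0$, the refinement step (lines 25--27) re-covers $\Phi_{\delta_t}$ by an increasingly fine $\gamma\delta_t$-cover while keeping survivors away from $\bar{\Phi}$; combined with soundness, this forces the surviving cover to converge to $\Phi^*$ from the outside, giving $|\Phi_{\delta_t}|\to|\Phi^*|$ and $c(\Phi_{\delta_t},\Gamma)\to c^*$ almost surely.

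The main obstacle is condition (ii): rigorously coupling the finite-sample, probabilistic $\epsilon\delta_t$-validation of each stage with the deterministic geometric refinement of $\delta_t$, so that the limiting cover is neither a strict over- nor under-approximation of $\Phi^*$. One must show that the exclusion margin $\gamma\delta_t$ around $\bar{\Phi}$ (line 15) shrinks fast enough not to permanently discard boundary points of $\Phi^*$, while the cover never leaks into the genuinely unsafe region that survives only through the $\epsilon$-slack of Theorem~\ref{thm:prob-complete-val}. Managing this simultaneous limit in $N$ (hence in $\delta_t$) together with $\epsilon$, and upgrading the per-stage ``with high probability'' guarantees to the required almost-sure statement via a Borel--Cantelli-type argument, is where the technical effort concentrates.
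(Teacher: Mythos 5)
Your plan reproduces the paper's own three-step argument almost exactly: probabilistic completeness by adapting the proof of Theorem~\ref{thm:ie-complete} for a fixed $\delta_t$, coverage of every invariant component $\Phi_i^*$ via the full $\delta_0$-cover of $\Sigma$ together with Assumption~\ref{asp:sufficient_delta0}, and soundness plus completeness of the pruning step (no vertex of $\Phi^*$ is ever discarded because pruning is triggered only by exits from $\Sigma$, while everything outside $\Phi^*$ is eventually discarded as $N\to\infty$). The paper itself gives only this sketch, and your additional observations --- in particular that pruning against $\Sigma$ rather than the instantaneous cover is precisely what repairs the flaw of Theorem~\ref{thm:te-incomplete}, and that a Borel--Cantelli-type argument is needed to pass from per-stage high-probability guarantees to the almost-sure statement --- are consistent with and somewhat more explicit than the paper's proof.
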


\begin{lemma}\label{lma:spe-optimal}
    Let $\delta_0$ be sufficiently small satisfying Assumption~\ref{asp:sufficient_delta0}. If the feedback control policy specified in line 7 with $U(\Gamma)$ is replaced with $U(\Gamma^*)$ with $\Gamma^*$ determined by the instantaneous $\Phi_{\delta_t}$ and Definition~\ref{def:adv-aset}, the algorithm remains asymptotically optimal.
\end{lemma}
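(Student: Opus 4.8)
The plan is to reduce the claim to Theorem~\ref{thm:spe-optimal} by showing that substituting $U(\Gamma^*)$ for $U(\Gamma)$ in line~7 alters neither the optimal solution the algorithm targets nor any of the invariance-based decisions it makes en route. The backbone is Theorem~\ref{thm:adv-aset}: for any fixed compact $\Phi$, robust controlled forward invariance with respect to $\Gamma$ holds if and only if it holds with respect to the adversarial subset $\Gamma^*$. Applying this with $\Phi$ taken to be each instantaneous candidate $\Phi_{\delta_t}$ shows that every per-state invariance classification on which the pruning and exploration branches depend is identical under the two sampling distributions. Since the output of $\mathcal{QNT}^{\text{SPE}}$ still reports the full $\Gamma$ and the cost~\eqref{eq:cost} is evaluated against it, the optimum $(\Phi^*,\Gamma)$ and its value $c^*$ are unchanged; only the algorithm's convergence behavior needs to be re-verified.

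First I would confirm that $\Gamma^*$ is nonempty and that $U(\Gamma^*)$ is a valid action distribution: since $f$ is Lipschitz continuous and $d_s(\cdot,\partial\Phi_{\delta_t})$ is continuous while $\Gamma$ and $\W$ are compact, the inner infimum over $\boldsymbol\omega$ is attained and the outer maximization in Definition~\ref{def:adv-aset} achieves its supremum. Next I would establish that no point of a genuinely invariant set is ever erroneously discarded: the pruning branch (line~9) fires only when a sampled trajectory leaves $\Sigma$, and because $\Gamma^*\subseteq\Gamma$, any $\boldsymbol\sig$ lying in a robustly controlled forward invariant subset keeps its trajectory inside that subset---hence inside $\Sigma$---under every adversarial action, so such $\boldsymbol\sig$ survives no matter how coarse the current $\Phi_{\delta_t}$ is. Conversely, by the contrapositive of Theorem~\ref{thm:adv-aset}, whenever $\Phi_{\delta_t}$ fails to be invariant a violating step is already attainable within $\Gamma^*$, so the inner validation loop still certifies $\epsilon\delta_t$-almost invariance via Theorem~\ref{thm:prob-complete-val}; its completeness bound rests on the i.i.d. state sampling of line~6 and is thus unaffected by restricting the action support to $\Gamma^*$. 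Finally, the covering refinement (lines~25--27) is independent of the action distribution, so the interior of $\Sigma$ is filled as $\delta_t\to0$ exactly as before and Assumption~\ref{asp:sufficient_delta0} still seeds each $\Phi_i^*$; replaying the proof of Theorem~\ref{thm:spe-optimal} with $\Gamma$ replaced by $\Gamma^*$ then delivers both probabilistic completeness and convergence of the cost to $c^*$.

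The main obstacle I anticipate is the coupling between the evolving approximation $\Phi_{\delta_t}$ and the set $\Gamma^*$ it induces: Theorem~\ref{thm:adv-aset} certifies invariance equivalence for a \emph{fixed} target, whereas here $\Gamma^*$ is recomputed against a moving candidate, so intermediate adversarial actions are adversarial for the current (possibly wrong) $\Phi_{\delta_t}$ rather than for $\Phi^*$. The resolution is two-sided. On the ``do not over-prune'' side the argument above relies only on $\Gamma^*\subseteq\Gamma$, not on $\Phi_{\delta_t}$ being accurate, so it is immune to approximation error. On the ``detect every violation'' side I would invoke a saturation-then-continuity argument: analogous to the finite-$M$ stabilization in the proof of Theorem~\ref{thm:ie-complete}, within each $\delta_t$-stage $\Phi_{\delta_t}$ eventually stops changing, so $\Gamma^*$ becomes fixed and Theorem~\ref{thm:adv-aset} applies verbatim; letting $\delta_t\to0$ then drives $\Phi_{\delta_t}\to\Phi^*$ and, by continuity of $d_s\circ f$, the induced $\Gamma^*$ toward the adversarial set of $\Phi^*$, so the limiting test is the correct one. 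A last check is that the possibly singleton-valued $\Gamma^*$ noted after Definition~\ref{def:adv-aset} does not impair exploration; this is immediate, since state-space coverage in $\mathcal{QNT}^{\text{SPE}}$ is carried by the refinement in lines~25--27 rather than by randomness in the action, so the degeneracy of $U(\Gamma^*)$ is harmless.
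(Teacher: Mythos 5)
Your proposal is correct and follows essentially the same route as the paper, whose entire argument for this lemma is to combine the proof sketch of Theorem~\ref{thm:spe-optimal} with Definition~\ref{def:adv-aset} and Theorem~\ref{thm:adv-aset}. You in fact go further than the paper by explicitly handling the subtlety that $\Gamma^*$ is recomputed against the moving candidate $\Phi_{\delta_t}$ (via the stabilization-within-each-$\delta_t$-stage argument), a point the paper's one-line justification leaves implicit.
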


The proof sketch of Theorem~\ref{thm:spe-optimal} follows a three-step argument. First, consider the adaptive exploration from a single initial point. The proof of Theorem~\ref{thm:ie-complete} can be adapted and the probabilistic completeness is guaranteed. Second, given Assumption~\ref{asp:sufficient_delta0}, it follows that for an arbitrary $\delta_t$-covering set of $\Sigma$, there exists at least one point initialized in each robustly controlled forward invariant subset of $\Sigma$. Hence the adaptive exploration sub-module of the Algorithm~\ref{alg:qnt_spe} is probabilistic complete and converges to at least a subset of $\Phi^*$. Finally, the pruning procedure is shown satisfying (i) no points in $\Phi_{\sig_t} \cap \Phi^*$ get removed, and (ii) all points in $\Phi_{\sig_t} \setminus (\Phi_{\sig_t} \cap \Phi^*)$ get removed, as number of samples $N$ tends to infinity. This completes the proof. Combining the above sketch with Definition~\ref{def:adv-aset} and Theorem~\ref{thm:adv-aset}, one can also obtain Lemma~\ref{lma:spe-optimal}.

In practice, Algorithm{~\ref{alg:qnt_spe}} does not execute indefinitely and the algorithm typically terminates if the sufficient accuracy is achieved to meet the choice of {$\epsilon$} and {$\delta$}. As a result, the total sample size {$N$} is a sufficiently large number determined by the user based on the studied case and other practical factors. When applied in practice, such as for the studied cases in the next section, the algorithm always converges to the desired accuracy (the set remains unchanged for {$N_{\epsilon}$} samples) before the sample limit is reached.

\begin{remark}\label{rmk:complexity}
    The computational complexity of Algorithm{~\ref{alg:qnt_spe}} is {$O(N^2)$} with the brute-force algorithm's complexity being {$O(N)$}. If one incorporates Theorem{~\ref{thm:boundary_val}} and Definition{~\ref{def:adv-aset}} by only investigating the adversarial action set and the boundary set of states, the complexity of Algorithm{~\ref{alg:qnt_spe}} can be further improved to be {$O(N\log{N})$}. However, as $N$ is related to the number of vertices on the graph, it essentially grows exponentially as the state-action dimension increases. In practice, for high-dimensional problems, one can relax the problem with appropriate choices of {$\delta$} and {$\epsilon$}.
\end{remark}

As discussed in Section{~\ref{sec:introduction}}, the computational efficiency between the proposed method and the backward reachability analysis is not directly comparable. In general, both methods suffer from the curse-of-dimensionality (CoD). For example, the complexity of the commonly adopted level-set method for backward reachable set approximation also grows exponentially along with the number of dimensions{~\cite{mitchell2004demonstrating}}.

It is worth emphasizing that the proposed Algorithm~\ref{alg:qnt_spe}, Theorem~\ref{thm:spe-optimal}, and Lemma~\ref{lma:spe-optimal} have unified most algorithms and key concepts introduced in the previous sections including the safety validation with Algorithm~\ref{alg:val_delta_epsilon}, the improved sampling scheme with the adversarial action set by Theorem~\ref{thm:adv-aset}, the pruning-based safety quantification with Algorithm~\ref{alg:qnt_te}, and the adaptive exploration considered by Algorithm~\ref{alg:qnt_ae}. Therefore, a successful numerical illustration of Algorithm~\ref{alg:qnt_spe} is also an empirical proof of the performance for many other introduced techniques. This is elaborated in the following case studies.



    


\subsection{Case study with the longitudinal driving system}
We study the class of longitudinal driving systems with emergency braking for collision avoidance. This covers most ADAS in practice, including AEB and Traffic Jam Assist (TJA). We present two examples, including (i) the safety quantification of the behavior agent in Carla~\cite{dosovitskiy2017carla} under various weather conditions, and (ii) a synthesized case study with a fleet of three vehicles leading to a more complex state space of five dimensions.

Before presenting the experiment details, we introduce two practical techniques that would enhance the performance of the proposed algorithm, prioritized sampling and experience replay. Both names are inspired by the Deep Q-Network (DQN) learning in the reinforcement learning field~\cite{mnih2013playing}. Similar methodologies are also observed in the scenario sampling context~\cite{feng2020testing, zhao2017accelerated}.

First, the prioritized sampling replaces the uniform sampling during the state exploration in Algorithm~\ref{alg:qnt_spe} (line 6) by assigning states that are closer to $\bar{\Phi}$ with a higher probability to be sampled. Formally speaking, for the $i$-th point $\boldsymbol\sig_i \in \Phi_{\sig_t}$, let $d_i=d(\boldsymbol\sig_i, \bar{\Phi})$, $k=|\Phi_{\sig_t}|$ and $d^*=\max_{i\in\Z_k}d_i$. Let $\alpha(\cdot)$ be a class $\mathcal{K}$ function. We then construct a probability vector $\boldsymbol\rho \in [0,1]^{k}$ satisfying $\boldsymbol\rho[i]=\frac{\alpha(d^*-d_i)}{\sum_{i\in\Z_k}\alpha(d^*-d_i)}$ as the sampling distribution on $\Phi_{\sig_t}$. Second, the experience replay saves all the visited trajectories as the ``experience" and replays the saved ones whenever the $\delta_t$ gets updated. This saves the practical efforts of running scenarios. Furthermore, the presented theoretical properties are not jeopardized by these two practical heuristics. It is of future interest to investigate whether prioritized sampling and experience replay would provably accelerate the quantification algorithm. 

Finally, it is worth emphasizing that techniques we introduced to enhance the safety validation solutions in Section~\ref{sec:validation} are also applicable and will be implemented on a case-specific basis for the following experiments.

\subsubsection{Lead-vehicle following system}
Consider the lead-vehicle following configuration shown in Figure~\ref{fig:lead_follow_demo}. The scenario state space $\Sigma$ consists of the SV velocity $v_0$, the lead vehicle velocity $v_1$, and the relative distance between the lead vehicle and the SV $p_{10}$. The action space $\Gamma \subseteq \mathcal{U}$ is the lead-vehicle's longitudinal control action space. The scenario propagation is controlled by the lead vehicle's longitudinal control policy subject to action saturation ($\mathcal{U}=[-5, 3](m/s^2)$ in Figure~\ref{fig:lead_follow_demo}).

\begin{figure}[!b]
    \centering
    \includegraphics[width=0.7\textwidth]{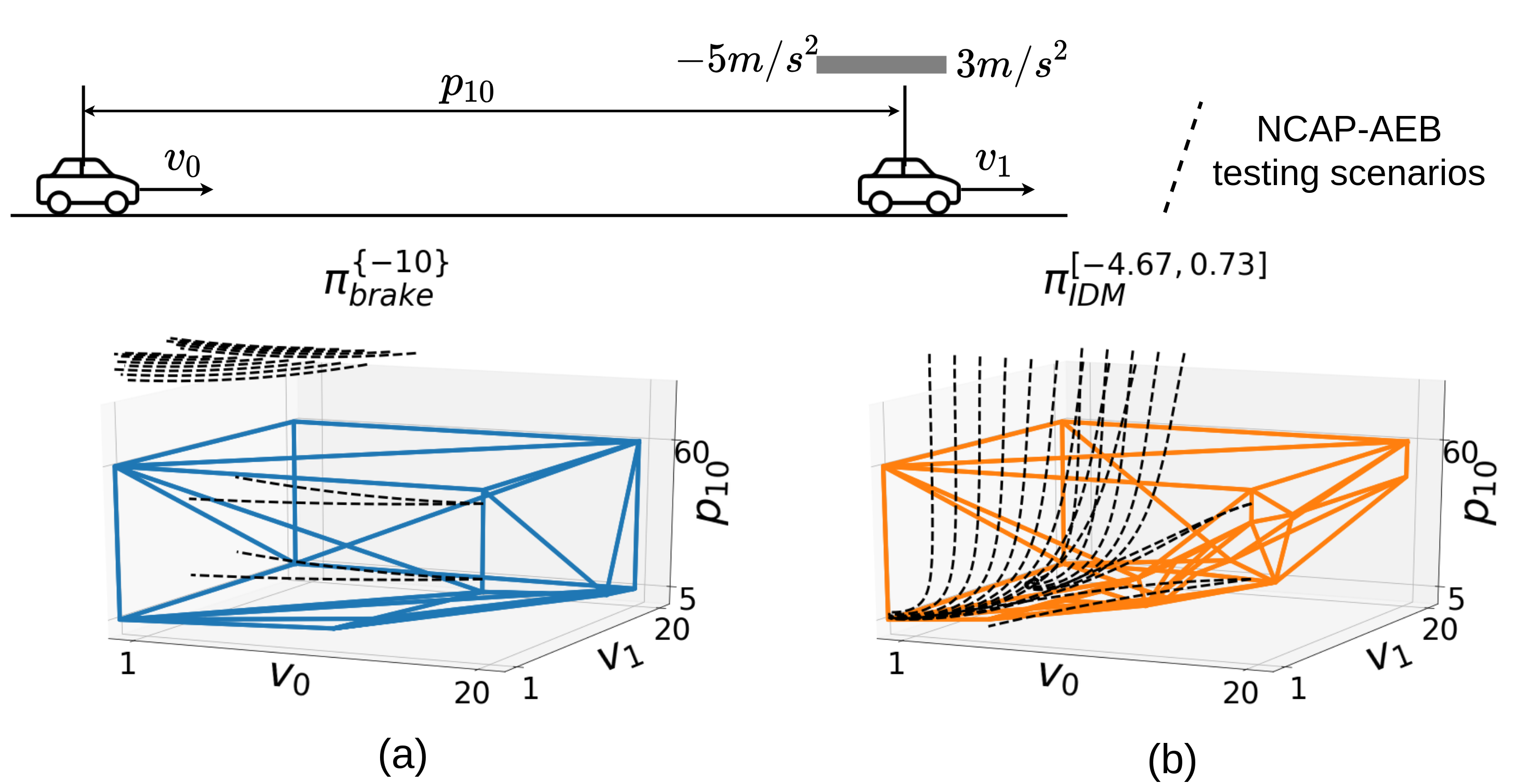}
    \captionsetup{singlelinecheck=off}
    \caption{A synthesized example of ``quantifying" the safety performance of two SV driving policies in the lead-vehicle following domain with Algorithm~\ref{alg:qnt_spe}: Two underlying SV policies are evaluated, (a) the SV keeps braking-to-stop at a constant rate of $-10 m/s^2$, (b) the SV executes the parameterized Intelligent Driving Model (IDM) with maximal braking capability set to $-4.67m/s^2$. For each SV policy, the $\epsilon\delta$-almost robustly controlled forward invariant set ($\epsilon=0.001, \delta=1$) is quantified with a confidence level of $0.9$ through the scenario sampling approach and illustrated by its convex hull approximation.}
    \label{fig:lead_follow_demo}
    \vspace{-3mm}
\end{figure}

Obviously, for sufficiently large $p_{10}$, the SV is safe regardless of $v_0, v_1$. Also, the safety analysis does not necessarily consider the entire admissible state space, e.g., extremely high velocity is not of interest. Therefore, during the set invariance analysis, some hard constraints are enforced. In the lead-vehicle following case, the upper-bound for $p_{10}$, the upper and the lower bounds for $v_0$ and $v_1$ are all hard constraints. The propagated states are truncated on default if the trajectory leaves the set through any of the facets induced by the hard constraints. This leaves only the bottom facet for analysis as shown in Figure~\ref{fig:lead_follow_demo} and Figure~\ref{fig:carla}. Furthermore, it is obvious that the adversarial action set with only the bottom facet of concern is the maximal braking maneuver, i.e., $\Gamma^*=\{-5\}$ for the synthesized examples in Figure~\ref{fig:rearleadfollow} and $\Gamma^*=\{1\}$ for the Carla simulation. For more complex adversarial action sets and adversarial policy designs, one can refer to~\cite{capito2020modeled} for example. Note that the choice between $\Gamma$ and $\Gamma^*$ does not affect the optimality, but may exhibit significantly different computational time. To obtain the same result shown in Figure~\ref{fig:lead_follow_demo}(a), the uniform sampling in $\Gamma$ empirically takes five times the number of samples required when using $\Gamma^*$.

We further emphasize the following observations from Figure~\ref{fig:lead_follow_demo}. 
First, for all policies in both figures, as SV velocity increases and lead vehicle velocity decreases, one requires a longer following distance to ensure safety.
Second, in Figure~\ref{fig:lead_follow_demo}, the brake-to-stop policy is more conservative than the IDM, which also aligns with the observation that the cardinality, 26734, of the approximated convex hull in (a) is much larger than the cardinality, 19793, in (b)).
Finally, the dark dashed trajectories in Figure~\ref{fig:lead_follow_demo} induce behaviors that match the standard EURO NCAP~\cite{van2017euro} testing scenarios for the Automatic Emergency Braking (AEB) car-to-car system within the applicable velocity range. Note that such a concrete scenario-based testing approach is insufficient to cover the actual safe operable domain induced by the set invariance analysis nor dangerous enough to force the SV to safety-critical events. 


\begin{figure}
    \centering
    \includegraphics[trim=0 4cm 0 1cm, clip, width=0.7\textwidth]{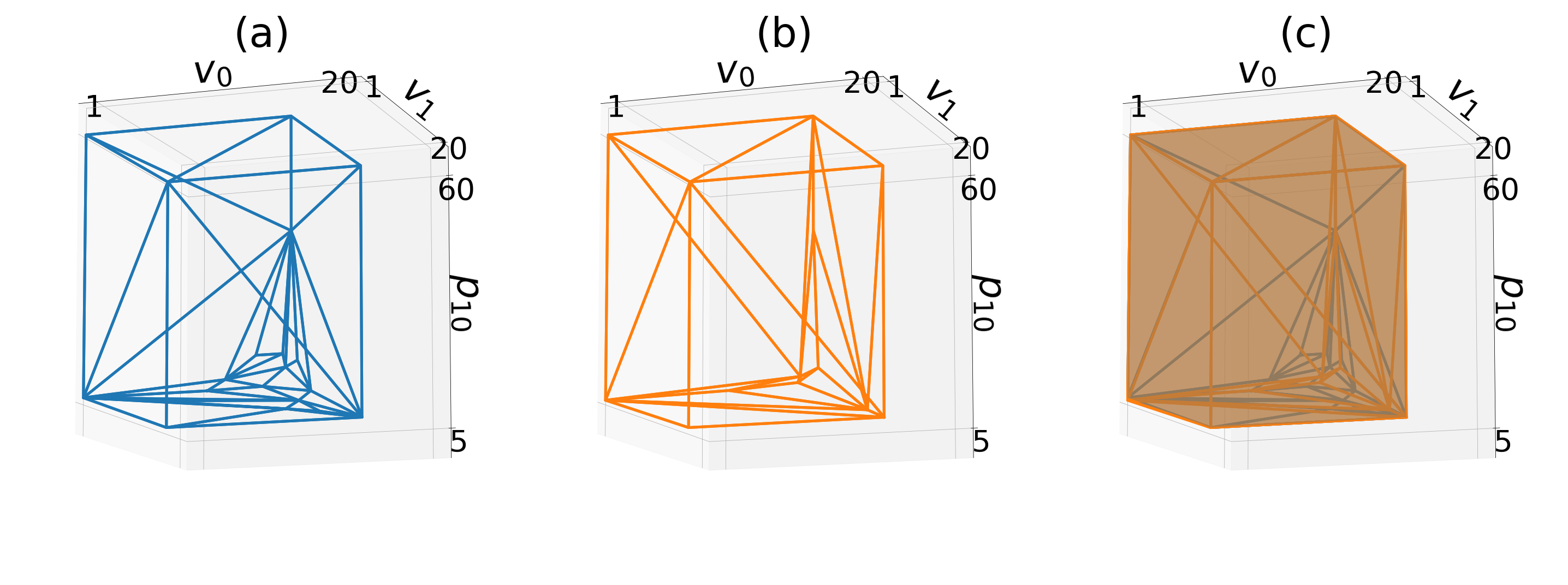}
    \caption{The maximal $\epsilon\delta$-almost robustly controlled forward invariant set ($\delta=1, \epsilon=0.01$) obtained through Algorithm~\ref{alg:qnt_spe} with confidence level of $0.9$ for two behavior agents in the Carla simulator: (a) The ``aggressive" behavior agent, (b) The ``cautious" behavior agent, (c) Combine the two agents, note that the obtained set from (b) contains the one from (a). 
    The lead-vehicle following case shares a similar configuration with Figure~\ref{fig:lead_follow_demo}. The complete process takes 1070 seconds and 1720 seconds of simulation time for the aggressive agent and the cautious agent, respectively. $\delta_t$ decays from 2.5 to 1 along the process. The initial $\Sigma$ is set to satisfy $v_0,v_1 \in [0,16](m/s)$ and $p_{10}\in[5.5,60](m)$.}
    \label{fig:carla}
\end{figure}
The lead-vehicle following case is also implemented in Carla and the quantification results for two behavior agent with different parameterized aggressiveness are shown in Figure~\ref{fig:carla}. In general, Figure~\ref{fig:carla} and Figure~\ref{fig:lead_follow_demo} share the similar observations. For example, as the SV velocity increases and as the lead vehicle's velocity decreases, one requires a longer following distance to ensure set invariance. However, the behavior agent is also different from the explicitly prameterized IDM. It is a combination of a high-level decision-making module, a low-level way-point planner, and a PID based controller. Some factors, such as the historical way-points and tracking error, are not directly captured by the state space $\Sigma$ or the action space $\mathcal{U}$, but could still affect the state-action propagation. Such unspecified factors also contribute to the disturbances and uncertainties induced by $\boldsymbol\omega$. Finally, the cautious agent is more conservative than the aggressive agent, which is also captured by Figure~\ref{fig:carla}(c) where the invariant set obtained for the cautious agent contains the one approximated for the aggressive agent.


\subsubsection{A fleet of three vehicles with SV in the middle}
The case studied involves three vehicles in the same lane, including the SV in the middle, a lead vehicle, and a follower. Consider the states of the SV velocity $v_0$, the lead vehicle velocity $v_1$, the follower velocity $v_2$, the relative distance between the lead vehicle and the SV $p_{10}$, and the relative distance between the SV and the follower $p_{20}$, leading to a 5-dimensional state space of $\Sigma \subseteq \R^5$. The scenario control includes the longitudinal acceleration for both the lead vehicle and the follower as  $\Gamma \subseteq \R^2$. The hard constraints are enforced similarly with the lead-vehicle following case. Correspondingly, the adversarial action set includes the follower's minimum braking capability and the maximal braking capability for the lead vehicle. Note that the admissible action space for the follower is not taking any positive acceleration control. That would induce an empty invariant set in the case where both traffic vehicles decide to cause a collision cooperatively. In practice, the construction of $\Gamma$ induces the intended aggressiveness of the scenario and should be designed appropriately such that the safe sub-domain at least exists. This is of practical value for other scenario work and is out of the scope of this paper.
\begin{figure}
    \centering
    \includegraphics[width=0.9\textwidth]{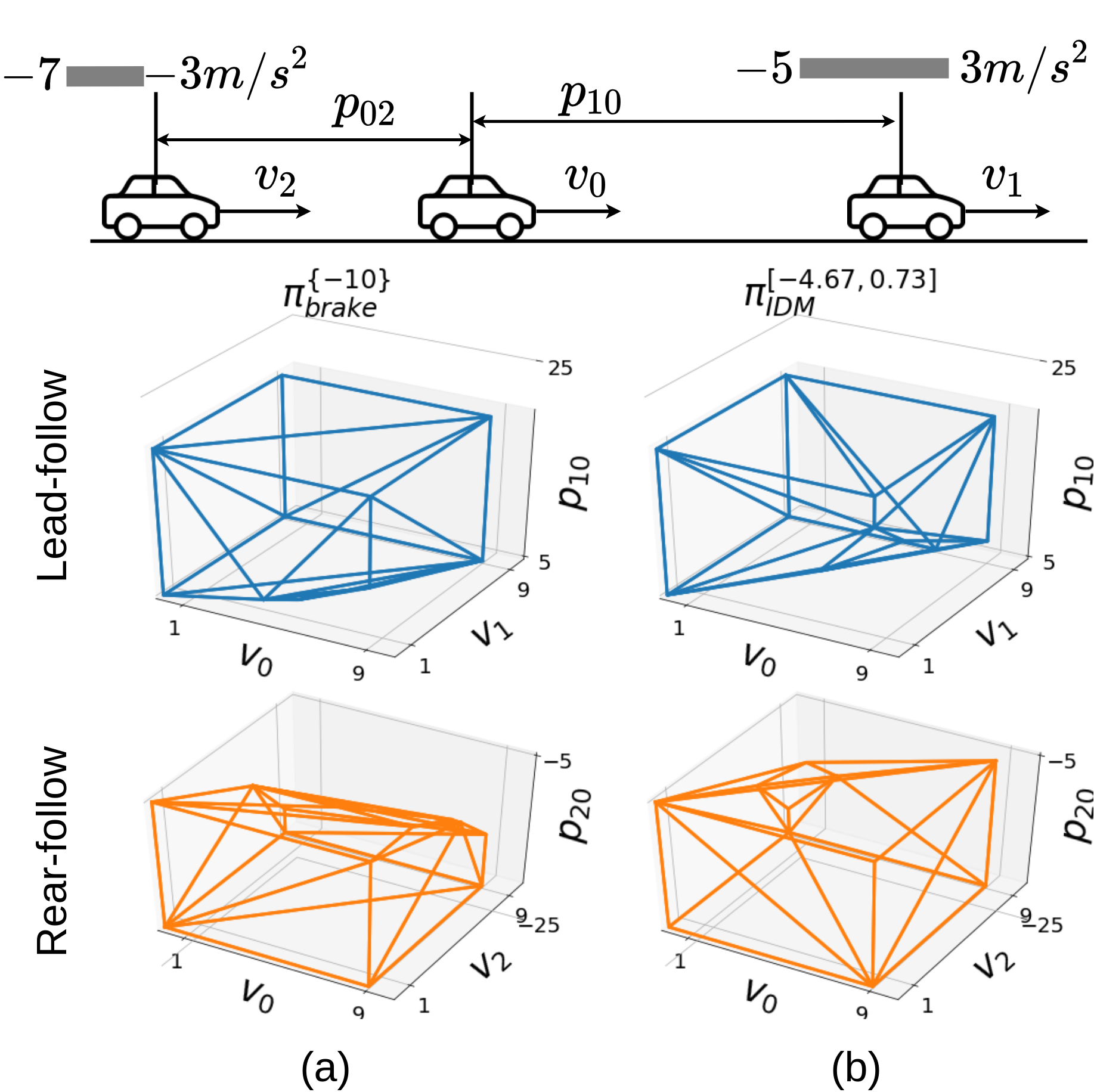}
    \caption{A synthesized example of quantifying the safety performance of two Subject Vehicle (SV) driving policies in the fleet of three vehicles with Algorithm~\ref{alg:qnt_spe}: the operable state space is described with $v_0, v_1, v_2 \in [0,6](m/s)$, $p_{10} \in [5,25](m)$, and $p_{20} \in [-25,-5](m)$. The admissible action space $\Gamma=[-5, 3]\times[-7,-3](m/s^2)$. Two underlying SV policies are evaluated, (a) the SV keeps braking-to-stop at a constant rate of $-10 m/s^2$, (b) the SV executes the parameterized Intelligent Driving Model (IDM) with maximal braking capability set to $-4.67m/s^2$. For each SV policy, the $\epsilon\delta$-almost robustly controlled forward invariant set ($\epsilon=0.001, \delta=1.5$) is quantified with a confidence level of $0.9$ through the scenario sampling approach.}
    \label{fig:rearleadfollow}
    \vspace{-3mm}
\end{figure}
\begin{table}
\vspace{3mm}
\renewcommand{\arraystretch}{1.3}
\caption{The cardinality of the approximated convex hull for each extracted subset of the $\epsilon\delta$-almost robustly controlled forward invariant set corresponding to Figure~\ref{fig:rearleadfollow}.}
\label{tab:rearleadfollow}
\centering
\begin{tabular}{|c|c|c|c|}
        \hline
        SV Policy & Lead-follow & Rear-follow & Total \\ \hline
        $\pi_{\text{brake}}^{\{-10\}}$ & 1149.188 & 949.5 & 2098.688 \\
        $\pi_{\text{IDM}}^{[-4.67,0.73]}$ & 1054.808 & 1123.513 & 2178.321 \\
        \hline
    \end{tabular}
\end{table}

\begin{remark}
    To the best of our knowledge, the ``safe lead distance" dictated by the interactions between the SV and its longitudinal follower is rarely studied in the literature. This is mainly due to the follower bearing full responsibility for all rear-end collisions considering the commonly accepted rules and regulations. However, such regulation was primarily designed for human drivers where the forward perception capability is significantly stronger than the rear-view case. This obviously does not hold for a typical ADS/ADAS-equipped vehicle. Hence we believe the case study is of research value and potential practical impact as the traffic rules evolve to be compatible with advanced automated vehicle features.
\end{remark}

In general, visualizing the 5-dimensional state space $\Phi$ is difficult. In Figure~\ref{fig:rearleadfollow}, we extract two subsets of the obtained state space that induce the lead vehicle interactions and the rear follower interactions, respectively. The cardinality comparison is listed in Table~\ref{tab:rearleadfollow}. 

\begin{remark}
    Note that the cardinality comparison is a straightforward way of comparing safety performance, but it is not the only way. This paper is not implying that the policy with a more extensive forward invariant set is ``safer". The set quantification is a precise analysis of the SV's safety performance. Different SV policies may exhibit various properties that shape the controlled forward invariant set differently, even sharing similar cardinality. In practice, the final assessment requires and varies w.r.t. the selected metric that justifies the concept of safety~\cite{iv:2019:weng, griffor2019workshop}.
\end{remark}

We further emphasize the following observations from Figure~\ref{fig:rearleadfollow} and Table~\ref{tab:rearleadfollow}. First, for the rear follower case, as $v_0$ decreases and as $v_2$ increases, one requires a longer following distance to maintain safety. This aligns with common sense. Second, a conservative SV policy, such as $\pi_{\text{brake}}^{\{-10\}}$ may present higher threat to the follower, leading to an invariant set with small cardinality.

We conclude this section by emphasizing that while, in theory, CoD is a commonly observed problem in this field~{\cite{feng2021intelligent,zhao2017accelerated,fan2017d}}, the practical complexity of a safety evaluation algorithm still varies significantly among methods presented with the same problem. For example, the intelligent driving example in~{\cite{arief2021deep}}, the AEB system considered in~{\cite{fan2017d}}, and the studied case in this section are all essentially the same in terms of the particular ODD being studied. However, the methodology considered by~{\cite{arief2021deep}} is confined to the finite-time safety assurance with the time duration of the scenario also being part of the state space, leading to a problem of 15 dimensions with the given hyper-parameter settings. The method in~{\cite{fan2017d}} shares the same three-dimension configuration with our studied case, but the method in~{\cite{fan2017d}} is dependent upon the assumed hybrid system model restricted to a limited subset of the original ODD (e.g., only constant braking and acceleration maneuvers are considered). Hence the actual space studied by~{\cite{fan2017d}} is significantly more limited than the one considered in our work.

\section{Conclusion and Discussions} \label{sec:conclusion}
This paper proposes a novel formulation of imposing the scenario sampling safety assurance problem as a data-driven robustly controlled forward invariant set validation and quantification problem. The problem is then further divided into a pair of complete safety validation problems, the feasible safety quantification problem, and the optimal safety quantification problem. Solutions of various properties are presented, analyzed analytically, and evaluated numerically. To a certain extent, this paper is also presenting a rigorous scheme to justify future scenario sampling algorithms' performance in terms of their completeness and optimality. Such an analytical perspective complements the observed empirical success. 

Within the scope of this paper, we are able to rigorously answer the three questions raised in Section~\ref{sec:introduction}. (i) Algorithm~\ref{alg:val_delta_epsilon} validates the claim of a safe ODD in finite number of scenarios with a quantifiable confidence level and accuracy dictated by Theorem~\ref{thm:prob-complete-val}. (ii) Algorithm~\ref{alg:qnt_ae} guarantees to find at least one safe sub-domain as the number of sampled scenarios tends to infinity supported by Theorem~\ref{thm:ie-complete}. (iii) Finally, Algorithm~\ref{alg:qnt_spe} ensures the asymptotic convergence to an approximation of the optimal safe domain with arbitrarily high accuracy by Theorem~\ref{thm:spe-optimal}. 

\bibliographystyle{IEEEtran}
\bibliography{output}

\end{document}